\documentclass[journal]{IEEEtran}
\usepackage{amsmath,amsfonts}
\usepackage{algorithm}
\usepackage{array}
\usepackage[caption=false,font=normalsize,labelfont=sf,textfont=sf]{subfig}
\usepackage{textcomp}
\usepackage{stfloats}
\usepackage{url}
\usepackage{verbatim}
\usepackage{graphicx}
\usepackage{cite}
\usepackage{algorithm}
\usepackage{adjustbox}
\usepackage{algpseudocode}
\usepackage{booktabs}
\usepackage{medmath}
\usepackage{amssymb}
\usepackage{pgfplots}
\usepackage{pgfplotstable}
\pgfplotsset{compat=1.18}
\usepgfplotslibrary{fillbetween}
\usepackage{amsthm}
\usepackage{titlesec}
\usepackage[final]{microtype}
\microtypesetup{
  protrusion=true,
  expansion=false   
}

\usepackage{amsmath,amssymb}
\allowdisplaybreaks
\usepackage{enumitem}
\setlist{nosep,leftmargin=*}

\usepackage{graphicx}
\usepackage{booktabs}
\usepackage[hang,flushmargin]{footmisc}

\usepackage[hidelinks]{hyperref}

\titlespacing*{\section}
{0pt}{0.6\baselineskip}{0.4\baselineskip}

\titlespacing*{\subsection}
{0pt}{0.5\baselineskip}{0.3\baselineskip}

\titlespacing*{\subsubsection}
{0pt}{0.4\baselineskip}{0.2\baselineskip}
\titlespacing*{\paragraph}
{0pt}{0.25\baselineskip}{0.5em}

\usepackage{tikz}
\pgfplotsset{compat=1.18}
\usetikzlibrary{shapes.geometric, arrows.meta, positioning, fit}
\pgfplotsset{
    hxaiRankPlot/.style={
        width=0.48\textwidth,
        height=0.38\textwidth,
        xlabel={$\epsilon$ (log scale)},
        ylabel={Rank (lower is better)},
        xmode=log,
        log basis x={10},
        ymin=0,
        ymax=180,
        y dir=reverse,
        grid=both,
        major grid style={gray!30},
        minor grid style={gray!15},
        tick align=outside,
        tick style={black},
        legend style={
            font=\scriptsize,
            draw=none,
            fill=white,
            at={(0.98,0.02)},
            anchor=south east
        },
        legend cell align=left,
        mark size=2.2pt,
        line width=0.9pt,
    }
}

\usepackage{enumitem}
\newtheorem{theorem}{Theorem}[section]

\newtheorem{definition}{Definition}

\usepackage{booktabs}
   \usepackage{medmath} 
\usepackage{tikz}
\usetikzlibrary{arrows.meta, positioning, shapes.geometric, calc}
\algnewcommand\Input{\item[\textbf{Input:}]}
\algnewcommand\Output{\item[\textbf{Output:}]}
\usepackage{url}
\begin{document}

\title{HXAI: Hierarchical Privacy-Preserving Explainable AI in Distributed Energy Systems}

\author{
Poushali Sengupta,
Sabita Maharjan,
Frank Eliassen,
and Yan Zhang%
\thanks{
Poushali Sengupta, Sabita Maharjan, Frank Eliassen, and Yan Zhang are with the Department of Informatics, University of Oslo, 0316 Oslo, Norway.
}
}
\markboth{Journal of \LaTeX\ Class Files,~Vol.~14, No.~8, August~2021}%
{Shell \MakeLowercase{\textit{et al.}}: A Sample Article Using IEEEtran.cls for IEEE Journals}


\maketitle

\begin{abstract}
Balancing electricity demand and supply is increasingly difficult due to the inherent intermittency of renewable power generation and the stochastic power consumption. Grid operators require fine-grained, decision-relevant insights into household energy consumption to manage peak loads and design responsive tariffs, but increased transparency at this level raises significant privacy concerns. Traditional methods for explainable AI (XAI) can reveal sensitive information, while standard privacy techniques often reduce the usefulness of explanations. To address this issue, we introduce HXAI, a hierarchical framework that preserves privacy while enabling reasonable explainable analysis for grid-level demand management. HXAI consists of two main components:
(1) a local model that generates fine-grained explanations within a secure, private environment, and
(2) a zonal model that aggregates these explanations to support grid-level analysis while enforcing privacy through flexible privacy-budget management.
We explicitly limit cumulative privacy exposure under repeated operator queries and show that the proposed framework preserves decision-relevant information without compromising household privacy. Experiments on both simulated and real-world energy datasets demonstrate that HXAI provides useful insights for zonal load management while ensuring that appliance-level consumption remains local and is never transmitted to grid operators. Our results show that preserving the semantic structure of explanations, rather than minimizing numerical error, is the key to XAI under differential privacy.
This framework provides a way to achieve both privacy and explainability in energy management.
\end{abstract}

\begin{IEEEkeywords}
Explainable AI, Differential Privacy, Grid Load Balancing, Smart Grids, Smart Homes, Zonal Aggregation, Privacy-Utility Trade-off.
\end{IEEEkeywords}

\section{Introduction}

Electricity grids worldwide are transforming due to the rapid integration of
renewable energy, increased household electrification, and widespread use of
IoT devices~\cite{faghosh2019iotenergy}. This evolution has increased volatility
in both supply and demand due to the inherent intermittency associated with
renewable energy sources and increasingly dynamic patterns of household
electricity consumption.\footnote{Renewable generation introduces natural
fluctuations~\cite{morstyn2018review}, while household demand variability has
intensified due to electric heating and cooling, electric vehicles, and other
high-power appliances~\cite{li2021evload}.} As a result, timely insights are
critical for understanding demand spikes, since even minor changes in local
consumption can lead to voltage instability and increased operational
challenges~\cite{kundur1994power}.\footnote{For example, localized demand surges
can trigger voltage drops on low-voltage feeders, increase transformer loading,
and necessitate conservative reserve margins to maintain grid stability.} While
individual households may seem inconsequential, collectively, their behaviors
can significantly influence the load curve, peak demand timing, and stress on
distribution networks~\cite{palensky2011demand}. Notably, high-consumption
devices such as heaters and appliances contribute to peak demand but are often
hidden from grid operators due to privacy concerns of household consumption
data~\cite{efthymiou2010privacy}. This lack of visibility hinders operators’
ability to anticipate surges, implement effective demand-response programs, and
design stability-focused tariffs~\cite{albadi2007demand}.

\par
Explainable artificial intelligence (XAI) offers a promising way to improve
understanding of energy consumption, using techniques such as SHAP and LIME to
identify patterns and the contributions of appliances to demand
spikes~\cite{lundberg2017shap,ribeiro2016lime}. However, applying XAI at the
household level is challenging due to privacy concerns, as detailed
explanations can reveal sensitive data~\cite{molnar2019explainable}. Simply
adding noise for privacy, such as Laplace noise to SHAP values, can undermine
the clarity and reliability needed for decision-making~\cite{abadi2016dpdl}.
Current privacy-preserving methods, such as federated learning, often provide
only broad statistics without in-depth, appliance-level insights and struggle
with effective privacy management and complex
queries~\cite{mcmahan2017fl}.\footnote{These approaches typically lack explicit
mechanisms for controlling cumulative privacy leakage or supporting fine-grained
explanation queries.} This leads to three key requirements:
(i) fine-grained, decision-relevant explainable signals that do not expose
household power consumption behavior, (ii) explainable zonal summaries that
remain useful under privacy-preserving noise, and (iii) explicit control of
cumulative privacy leakage under repeated analytical queries.

\noindent To tackle these challenges, we propose HXAI, a hierarchical framework
for privacy-preserving explainability in distributed energy systems,
supporting explainable analysis for grid-level demand management. HXAI splits
responsibilities into two layers: at the local layer, each household generates
SHAP explanations of its device-level consumption within a secure
private-databox, ensuring no raw data leakage~\cite{costan2016sgx}. At the zonal
layer, an aggregator receives differentially private summaries based on a
dynamically negotiated privacy budget between households and the energy
provider~\cite{dwork2014dp}. This dynamic negotiation allows households to
control their data exposure while providing operators with meaningful
aggregated explanations.

Existing privacy-preserving explainability approaches primarily aim to minimize
numerical distortion of explanations under differential privacy. In contrast,
HXAI is built around a different principle: \emph{semantic stability of
explanations}. We argue that, under privacy constraints, preserving the
\emph{structural invariants} of explanations, such as feature identity, sign,
dominance, and rank ordering, is more critical for explainability than
preserving exact numerical values~\cite{lipton2018mythos}. HXAI operationalizes
this principle through hierarchical aggregation, ensuring that appliance-level
explanations remain noise-free within secure local environments, while only
differentially private, aggregated explanation summaries are shared at the
zonal level.\footnote{Unlike federated learning, which focuses on collaborative
model training via the aggregation of gradients or parameters while keeping raw
data local~\cite{mcmahan2017fl}, HXAI does not perform distributed model
optimization but instead governs how explanations are aggregated and disclosed
across system layers.} HXAI fundamentally differs in both objective and
abstraction: it does not aim to learn a shared global model. Instead, HXAI
treats explanations as first-class analytical objects. Local models remain
household-specific, and only differentially private explanation summaries, not
parameters, gradients, or predictions, are aggregated. As a result, HXAI
enables appliance-aware explainability without requiring model alignment,
shared representations, or centralized optimization, which are core
assumptions in federated learning.

HXAI also integrates a mechanism for multiple-query privacy accounting,
limiting cumulative leakage when operators issue successive
queries~\cite{dwork2010composition}. Once the privacy threshold is met,
households can automatically deny further queries, preventing overexposure.
Theoretical analysis confirms that the aggregated SHAP explanations remain
unbiased under differential privacy noise~\cite{dwork2014dp}. Experimental
results show that HXAI maintains explainability and utility-preserving
insights, even under strong privacy constraints, allowing grid operators to
monitor behavioral trends and manage peak loads while safeguarding household
privacy.In summary, HXAI provides a principled and privacy-safe foundation for distributed energy intelligence, addressing the trade-off between household privacy and decision-relevant demand analysis for grid operations. This work makes the following key contributions toward privacy-safe and explainable grid-load balancing in distributed energy systems:
\begin{enumerate}[leftmargin=*, itemsep=0pt, topsep=0pt]
    \item \textbf{Semantic-Stability–Preserving Explainability under Differential Privacy.} 
    We propose HXAI, a hierarchical explainability framework that provides fine-grained, decision-relevant explanations while preserving semantic structure (feature identity, ranking, and directional influence) under differential privacy.
    \item \textbf{Adaptive, Query-Aware Privacy Control.}  
    HXAI introduces an \textbf{Information Control Agreement (ICA)} that governs permitted queries, output granularity, and cumulative privacy budgets to control long-term privacy leakage under repeated analysis.
    \item \textbf{Utility-Preserving Zonal Visibility with Guarantees.}  
    We provide theoretical and empirical evidence that HXAI maintains stable, useful zonal explanations even when individual contributions are protected by privacy-preserving noise.
\end{enumerate}
\noindent Collectively, these contributions establish HXAI as a practical and theoretically grounded foundation for privacy-preserving, explainable, and operationally relevant demand analysis for distributed smart-home environments.

\section{Related Work}

Since this work lies at the intersection of explainable AI, privacy, and smart
energy systems~\cite{fang2012smartgrid,palensky2011demand}, we organize the
related literature into three themes: (i) explainable AI for energy systems,
(ii) privacy-preserving machine learning, and (iii) differential privacy and
hierarchical architectures for distributed analytics. We discuss the
limitations of existing approaches and show how they motivate the hierarchical,
privacy-aware design of HXAI.

Explainable artificial intelligence (XAI) has been applied in energy
forecasting, home automation, and demand-response
analytics~\cite{ahmad2018energyxai}. Techniques such as SHAP, LIME, and various
feature-attribution methods offer transparency into the temporal or
appliance-level signals that influence energy
consumption~\cite{lundberg2017shap, ribeiro2016lime}. Previous studies have
utilized these techniques on smart-meter data or building-level loads to
enhance explainability~\cite{zhang2019energyxai}. However, existing XAI
implementations face two main structural limitations:
(i) explanations are typically generated centrally, necessitating access to
raw household data; and (ii) the resulting explanations are directly influenced
by fine-grained behavioural patterns, potentially revealing occupancy and
appliance-usage routines~\cite{efthymiou2010privacy}.To the best of our
knowledge, there is no prior work that provides fine-grained, device-level
explainability for high-impact residential loads while simultaneously ensuring
household privacy. Moreover, existing XAI approaches in smart energy systems are
largely disconnected from operational demand-management needs, where
understanding the contribution of major household loads (e.g., heating,
cooling, and electric vehicles) is essential for explaining peak demand, yet
cannot be directly exposed due to privacy constraints.

Privacy-preserving machine learning has advanced notably through cryptographic
and system-level techniques such as secure multi-party computation (SMPC),
homomorphic encryption (HE), and trusted execution
environments~\cite{goldreich1998secure, gentry2009fully, costan2016sgx}. While
HE and SMPC provide strong guarantees for protecting raw data during distributed
computation, their primary objective is data confidentiality rather than
explainability. Applying HE or SMPC to explanation mechanisms such as SHAP would
require encrypted evaluation of high-dimensional attribution functions across
many coalitions, leading to substantial computational and communication
overhead~\cite{bost2015machine}. This renders such approaches impractical for
real-time or repeated explainability queries in smart-grid settings.\footnote{More
importantly, cryptographic methods do not address explanation-specific
challenges, such as preserving feature identity, rank ordering, and directional
influence under repeated queries. While intermediate values are hidden, these
methods provide no mechanism for bounding cumulative information leakage or for
negotiating privacy--utility trade-offs over time~\cite{dwork2014dp}.} HXAI
therefore adopts a complementary approach: rather than encrypting all
computation, it treats explanations as first-class analytical objects,
generated locally within each household, and enforces privacy at the
explanation interface through differential privacy and query-aware accounting
before any aggregated information is shared.

Federated learning approaches ensure that raw data remains on-site while models
are collaboratively trained~\cite{mcmahan2017fl}. However, they lack mechanisms
for sharing appliance-level explainable signals with grid operators, and
existing explanation methods for federated models remain coarse-grained and
largely disconnected from household-specific or device-level operational
insights~\cite{li2020fedxai}. Local differential privacy (LDP) provides robust
individual privacy by adding noise before data is
shared~\cite{duchi2013localdp}. While helpful in obtaining coarse statistics, LDP
disrupts the fine-grained structure necessary for explainability, making it
unsuitable for SHAP-based device
attribution~\cite{wang2019ldp}. Existing privacy-preserving XAI methods in
sectors such as healthcare and finance aim to mask individual
contributions~\cite{slack2020dpXAI}.\footnote{However, none address the unique
needs of distributed energy intelligence or the requirement for hierarchical
explainability. These requirements differ from other domains because
distributed energy systems must reconcile fine-grained, device-level reasoning
with strict privacy constraints and multi-level aggregation across households
and grid regions.}

 DP has been widely utilised in statistical
aggregation, smart meter summarization, and distributed load
forecasting~\cite{erlingsson2014rappor, acs2011smartmeterdp}. However,
\textbf{(i) DP aggregation typically yields scalar or coarse-level summaries,
not \textit{explainable} outputs. (ii) DP methods assume static privacy budgets
and do not address repeated queries. (iii) No DP framework to date supports
\textit{SHAP-level explainability} with controlled leakage across a
hierarchy.} Edge-based and hierarchical architectures~\cite{edgebox, databox_pds} enable
privacy-sensitive processing close to the data source. Several studies propose
smart grid aggregation schemes~\cite{fog_iot_energy} and zonal-level
forecasting using HE. However, these systems: provide no mechanisms for
\textbf{local explainability}, lack \textbf{differentially private explanation
aggregation}, assume static privacy constraints, and do not integrate
\textbf{dynamic privacy-budget negotiation}. To the best of our knowledge, existing architectures do not jointly provide (i) noise-free, explainability-rich household reports and (ii) differentially private, appliance-aware zonal summaries suitable for distributed energy analysis. Table~\ref{tab:comparison_updated} summarizes how existing methods differ from HXAI.

\begin{table*}[htbp]
\centering
\caption{Comparison of HXAI with Existing Privacy and XAI Frameworks}
\label{tab:comparison_updated}
\scriptsize
\begin{adjustbox}{width=\linewidth}
\begin{tabular}{lcccccc}
\toprule
\textbf{Method} & \textbf{XAI} & \textbf{Privacy} & \textbf{Hardware} & \textbf{Query} & \textbf{System} & \textbf{Model} \\
& \textbf{Level} & \textbf{Mechanism} & \textbf{Security} & \textbf{Control} & \textbf{Utility} & \textbf{Ownership} \\
\midrule
SHAP / LIME \cite{lundberg2017shap, ribeiro2016lime} & Local/Post-hoc & None & No & Static & High & Centralized \\
Federated Learning \cite{mcmahan2017fl, li2020fedxai} & Global/Partial & Encryption/HE & No & Static & Medium & Global/Shared \\
Local DP \cite{duchi2013localdp, wang2019ldp} & None & Perturbation & No & Static & Low & Decentralized \\
Smart Grid DP \cite{acs2011smartmeterdp, efthymiou2010privacy} & None & Noise Injection & No & Static & Medium & Decentralized \\
Privacy Box/TEE \cite{databox_pds, schneider2022tee} & Local Only & Isolation & Yes (TEE) & None & Low & Local Only \\
\midrule
\textbf{HXAI (Proposed)} & \textbf{Edge-Native} & \textbf{DP + Isolation} & \textbf{Yes (TEE)} & \textbf{Dynamic} & \textbf{High} & \textbf{Decentralized} \\
\bottomrule
\end{tabular}
\end{adjustbox}
\end{table*}

Our research expands the concept of DP into the area of explainability and introduces a new query-level leakage bound, which is vital for practical applications in grid operations. The framework, HXAI, offers a unique combination of features: local, noise-free appliance-level explainability; differential privacy for zonal explainability; dynamic privacy-budget negotiation; and cumulative leakage control for multi-query analytics. This comprehensive approach enables privacy-safe and explainable analysis of energy demand in distributed systems, addressing a gap in existing work related to planning, monitoring, and diagnostic decision support, for example, identifying the dominant contributors to recurring evening demand peaks at the neighborhood level without exposing individual household behavior, rather than real-time grid control. To formalize the deployment assumptions and the privacy risks considered in this work, we next describe the system architecture and the corresponding threat model. For clarity, Table~\ref{tab:notation} summarizes the notation used throughout the system model, theoretical analysis, and experimental evaluation.
\begin{table}[t]
\centering
\footnotesize
\setlength{\tabcolsep}{4pt}
\caption{Notation used throughout HXAI}
\label{tab:notation}
\begin{tabular}{@{}ll@{}}
\toprule
\textbf{Symbol} & \textbf{Meaning} \\
\midrule
$i \in \{1,\dots,H\}$ & Household (data owner) index \\
$k \in \{1,\dots,d\}$ & Appliance / device category index \\
$t \in \{1,\dots,T\}$ & Time index \\

$D_{\mathrm{local},i}$ & Local multivariate time series of household $i$ \\
$x_{i,k,t}$ & Consumption of appliance $k$ at time $t$ in household $i$ \\

$f_{\mathrm{local},i}$ & Local prediction model of household $i$ \\
$\widehat{Y}_{i,t}$ & Predicted demand of household $i$ at time $t$ \\

$\phi_{i,k,t}$ & SHAP attribution of appliance $k$ at time $t$ \\
$\phi_{i,t} \in \mathbb{R}^d$ & Vector of appliance-level SHAP attributions \\
$E_{i,t}$ & Model-weighted explanation at time $t$ \\

$S_i \in \mathbb{R}^d$ & Household SHAP summary over time \\
$\widetilde{S}_i$ & Differentially private household explanation \\
$\eta_i$ & Laplace noise vector \\
$\Delta S$ & $\ell_1$-sensitivity of $S_i$ \\

$S_{\mathrm{zonal}}$ & Aggregated zonal explanation vector \\
$S_{\mathrm{true}}$ & Non-private (oracle) zonal explanation \\
$U$ & Zonal explanation utility (cosine similarity) \\

$q_j$ & $j$-th explainability query \\
$\varepsilon_{i,j}^{\mathrm{adjust}}$ & Privacy cost of query $q_j$ for household $i$ \\
$L_i^{\mathrm{total}},\, L_i^{\max}$ & Cumulative and maximum privacy leakage \\

$\mathrm{ICA}_u$ & Information Control Agreement of entity $u$ \\
$\mathcal{Q}_u$ & Queries permitted under $\mathrm{ICA}_u$ \\
\bottomrule
\end{tabular}
\end{table}

\section{System Model and Threat Model}
We consider a distributed smart-grid environment composed of $H$ households, a
zonal edge server operated by the electricity provider, and an optional cloud
backend used only for non-sensitive aggregation~\cite{fang2012smartgrid, palensky2011demand}.
The objective is to enable device-level explainability for grid-load balancing
while ensuring that no household reveals private behavioural patterns or
appliance routines~\cite{efthymiou2010privacy}. In this section we formalize the data
components, entities, communication channels, and privacy assumptions of the
HXAI architecture.

\subsection{System Entities and Trust Assumptions}
\label{sec:system-model}

The HXAI framework consists of multiple interacting entities that collectively
enforce explainability, privacy preservation, and controlled data access across
distributed energy environments~\cite{mortier2016personal}. We explicitly define
each entity, its responsibilities, and the associated trust assumptions.

\paragraph{\textbf{Querying Entity}}
A \emph{Querying Entity} $u \in \{1,\dots,U\}$ denotes an authorized external stakeholder, such as
an energy service provider, grid operator, electricity provider, or accredited
analyst, that issues explainability queries over aggregated household-level
data~\cite{albadi2007demand, palensky2011demand}. Querying entities operate in a
planning, monitoring, and diagnostic capacity, for example by analyzing
contributors to recurring demand peaks, assessing behavioral responses to
tariff policies, forecasting localized demand trends, or evaluating
appliance-upgrade programs. Querying entities do not have direct access to raw
consumption data or noise-free local explanations and interact with the system
exclusively through formally specified, ICA-governed queries. Queries may be
general, yielding zonal summaries, or more specific, requesting appliance-level
signals subject to higher privacy cost. Querying entities are assumed to be
\emph{honest-but-curious}: they follow prescribed protocols but may attempt to
infer additional information from received outputs~\cite{goldreich1998secure}.

\paragraph{\textbf{Household}}
A \emph{Household} $i \in \{1,\dots,H\}$ denotes a residential unit equipped with
smart energy devices and a local explainability module operating within a
private environment. Households retain full access to their own raw consumption
data and noise-free local explanations, which are never directly exposed outside
the household boundary. Each household specifies privacy constraints governing
the release of aggregated or explained information and is treated as a protected
entity in the system. Depending on the application, a household may also act as a querying entity—for
example, when requesting explainability summaries over a local neighborhood or
community—subject to the same authentication and privacy controls as external
querying entities.

\paragraph{\textbf{Authenticator.}}
The \emph{Authenticator} is responsible for identity verification, access
control, and session management~\cite{ferraiolo2001rbac}. It validates credentials
and ensures that only authorized \emph{querying entities} $u$ may initiate
explainability queries. The Authenticator does not access private data or
explanation outputs; its role is limited to verifying identities and binding
authenticated querying entities to their corresponding Information Control
Agreements (ICAs). We assume the Authenticator to be trusted for correctness but
not for data confidentiality.

\paragraph{\textbf{Information Control Agreement (ICA).}}
An \emph{Information Control Agreement (ICA)} is a formal, machine-explainable
contract that governs the execution of explainability queries issued by an
authorized \emph{querying entity}. Let $u \in \{1,\dots,U\}$ index the set of
authorized querying entities. Each querying entity $u$ is associated with an
ICA, represented as
\begin{equation}
    \medmath{\mathrm{ICA}_u =
    \big( \mathcal{Q}_u,\; \varepsilon_u^{\max},\; \mathcal{O}_u,\; T_u \big).}
\end{equation}

Here, $\mathcal{Q}_u$ specifies the permitted query types and scope (e.g.,
general or specific queries), $\varepsilon_u^{\max}$ denotes the maximum
cumulative privacy budget allocated to querying entity $u$, $\mathcal{O}_u$
defines the allowed output granularity, and $T_u$ is the validity period of the
agreement. The ICA specifies an upper bound $\varepsilon_u^{\max}$ on the
cumulative privacy loss that may be incurred through queries issued by $u$,
where smaller values of $\varepsilon$ correspond to stronger privacy
guarantees. Individual queries $q \in \mathcal{Q}_u$ incur a privacy cost
$\varepsilon(q)$, determined by the query’s granularity and potential
information leakage, and are charged against this bound. In addition, each household $i \in \{1,\dots,H\}$ is treated as a protected
entity that may impose its own privacy constraint by specifying a maximum
permissible privacy loss $\varepsilon_i^{\max}$. A query is executed only if its
cumulative privacy cost satisfies both the querying entity’s privacy constraint
$\varepsilon_u^{\max}$ and the privacy constraints of all affected households.
General queries incur lower privacy cost, while specific appliance-level queries
consume a higher privacy budget~\cite{dwork2014dp}. This structure enables
fine-grained, query-aware privacy accounting under the ICA. ICAs are established through an explicit negotiation phase, during which
privacy--utility trade-offs are adjusted within predefined system-level and
household-level constraints~\cite{li2017privacytradeoff}. Once finalized, an ICA
serves as a binding policy artifact that constrains all subsequent query
executions associated with querying entity $u$ and enables cumulative
query-level privacy accounting. Each ICA is bound to a unique household–Private
Data Box pair, ensuring that privacy budgets are enforced independently for
each household.

\paragraph{\textbf{Data Dealer}.}
The \emph{Data Dealer} acts as a policy gatekeeper and compliance authority,
analogous to data governance and consent-management components in privacy-aware
systems~\cite{spiekermann2015privacy}. It evaluates negotiated Information
Control Agreements (ICAs) and determines whether they satisfy regulatory,
contractual, and system-level constraints before approval. The Data Dealer does
not execute queries or process household data directly; instead, it ensures that
only ICAs conforming to organizational policies and cumulative privacy-loss
constraints are admitted for execution. We assume the Data Dealer to be trusted
for policy enforcement but not for access to private data or explanations.

\paragraph{\textbf{Private Data Box}.}
Each household $i \in \{1,\dots,H\}$ is equipped with a \emph{Private Data Box},
which hosts local prediction models and explanation mechanisms within a private
execution environment~\cite{mortier2016personal}. The Private Data Box is
responsible for local model inference, explanation generation, and enforcement
of household-specific privacy constraints specified by approved ICAs. In this
work, the Private Data Box is treated as a \emph{trusted black-box component}.
We assume the availability of industry-standard trusted execution environments
(TEEs)~\cite{costan2016sgx, schneider2022tee, mdpi_riscv_tee, menetrey2022attestation}
that provide confidentiality of household data, integrity of executed code, and
controlled release of differentially private outputs.

\paragraph{\textbf{Zonal Aggregator}.}
The \emph{Zonal Aggregator} collects explanation summaries released by multiple
Private Data Boxes and produces aggregated, zonal-level explanations under
differential privacy~\cite{erlingsson2014rappor}. Aggregation is performed
exclusively over privacy-preserving outputs that comply with the active ICAs,
ensuring that raw consumption data and noise-free household-level explanations
never leave the household boundary. The Zonal Aggregator is assumed to be
untrusted with respect to household data and explanations: it is not granted
access to raw data or noise-free outputs and operates only on differentially
private summaries. Correct noise calibration and privacy enforcement are
performed at the household level within the Private Data Boxes, while the Zonal
Aggregator is assumed only to execute the specified aggregation.

\subsection{\textbf{Trust Boundaries and Threat Model}}

HXAI enforces a strict separation between trusted and untrusted components.
Household-level data and noise-free explanations remain confined within Private
Data Boxes, while only differentially private summaries that satisfy
household-imposed constraints and approved ICAs are exposed beyond this
boundary~\cite{dwork2006dp}. Trusted components (Authenticator, Data Dealer, and
Private Data Boxes) are assumed to execute correctly and never release raw data
or noise-free explanations. HXAI adopts a semi-honest (honest-but-curious) threat
model consistent with the differential privacy literature. Adversaries—including
querying entities and aggregators—may observe all differentially private zonal
outputs, combine reports across households, and issue repeated queries to infer
sensitive household information or re-identify households through cumulative
leakage. Communication is assumed to be encrypted, and differential privacy noise
is applied before any information leaves the household boundary. Attacks on
hardware security primitives and side-channel leakage are outside the scope of
this work. Under these assumptions, HXAI enforces event-level differential
privacy within households while providing privacy guarantees at the zonal
interface.

\begin{figure}[htbp]
\centering
\begin{adjustbox}{width=\linewidth}
\begin{tikzpicture}[
    node distance = 0.5cm,
    font=\sffamily\Large, 
    base/.style = {
        rectangle, rounded corners,
        draw=black!70, thick,
        text width=6.2cm,
        align=center,
        minimum height=2em
    },
    actor/.style = {base, fill=blue!10},
    control/.style = {base, fill=orange!15},
    secure/.style = {base, fill=green!12},
    process/.style = {base, fill=gray!10},
    output/.style = {base, fill=red!10},
    arrow/.style = {->, >=Stealth, thick}
]

\node[actor] (client) {Client / Grid Operator\\\small (Issues explainability queries)};
\node[control, below=of client] (auth) {Authenticator\\\small (Identity \& access verification)};

\node[control, below=of auth] (ica_prop) {ICA Proposal\\\small (Query scope, privacy budget, purpose)};
\node[control, below=of ica_prop] (ica_neg) {ICA Negotiation\\\small (Privacy--utility trade-off)};
\node[control, below=of ica_neg] (ica_final) {Revised ICA};

\node[control, below=of ica_final] (dealer) {Data Dealer Approval\\\small (Policy \& compliance gate)};

\node[secure, below=of dealer] (tee) {Private Data Box (TEE)\\
\small Local model inference\\
\small Noise-free SHAP (local)\\
\small Differential privacy enforcement};

\node[process, below=of tee] (agg) {Zonal Aggregator\\\small (Aggregates DP explanations)};
\node[output, below=of agg] (report) {Noise-injected Zonal Report\\\small (Returned to client)};

\draw[arrow] (client) -- (auth);
\draw[arrow] (auth) -- (ica_prop);
\draw[arrow] (ica_prop) -- (ica_neg);
\draw[arrow] (ica_neg) -- (ica_final);
\draw[arrow] (ica_final) -- (dealer);
\draw[arrow] (dealer) -- (tee);
\draw[arrow] (tee) -- (agg);
\draw[arrow] (agg) -- (report);

\node[
    draw=black!60,
    dashed,
    line width=2pt,
    rounded corners,
    inner sep=12pt,
    fit=(tee),
    label={[font=\LARGE, rotate=360]right:Trusted Execution Boundary}
] {};

\end{tikzpicture}
\end{adjustbox}
\caption{End-to-end HXAI workflow. Queries are authorized via an Information Control Agreement (ICA) and executed inside a trusted Private Data Box. Local explanations remain noise-free, while only differentially private summaries are aggregated at the zonal level.}
\label{fig:hxai-system}
\end{figure}


\subsection{\textbf{Data Model}}

We consider a distributed setting in which each household
$i \in \{1,\dots,H\}$ continuously records a multivariate time series of energy
measurements from IoT-enabled devices,
\begin{equation}
    \medmath{D_{\mathrm{local},i}
    = \{x_{i,t} \in \mathbb{R}^{d} \mid t = 1,\dots,T\},}
\end{equation}
where $d$ denotes the number of device categories and $T$ the number of observed
time points. We do not assume that all households possess the same set of
devices. Instead, each household observes a household-specific subset of
appliances. For notational convenience and to enable aggregation, all local
explanations are embedded into a common $d$-dimensional device-category space,
where devices not present in a given household are represented by zeros. This
representation allows heterogeneous households to be aggregated consistently
at the zonal level without revealing device ownership. Using this local data,
each household trains a predictive model
\begin{equation}
   \medmath{\widehat{Y}_{\mathrm{local},i}
   = f_{\mathrm{local},i}(D_{\mathrm{local},i}),}
\end{equation}
which forecasts short-term electricity demand, such as next-hour or next-day
consumption. To explain these predictions, appliance-level SHAP attributions are
computed locally. For a given household $i$ at time $t$, the local SHAP
explanation is defined as the vector
\begin{equation}
    \medmath{
    \mathrm{SHAP}_{\mathrm{local},i}(x_{i,t})
    =
    \big(
        \phi_{i,1,t} x_{i,1,t},
        \dots,
        \phi_{i,d,t} x_{i,d,t}
    \big)
    \in \mathbb{R}^d,}
\end{equation}
where the term $\phi_{i,k,t} x_{i,k,t}$ represents the model-weighted explanatory
contribution of device $k$ to the predicted consumption at time $t$. Here,
$x_{i,k,t}$ denotes the observed energy usage of device $k$, while the SHAP
coefficient $\phi_{i,k,t}$ captures the relevance assigned to that device by the
predictive model in the given context. Their product therefore quantifies the
explanatory impact of device $k$ on the prediction, rather than raw consumption
alone. These explanations remain noise-free and are accessible only within the
household’s Private Data Box.

We emphasize that raw device-level consumption $x_{i,t}$ and SHAP attributions
serve fundamentally different purposes. While $x_{i,t}$ reports observed energy
usage per device, SHAP attributions explain how each device contributes to the
model’s prediction at time $t$, accounting for feature interactions,
correlations, and nonlinear model behavior. As such, SHAP explains the
predictive mechanism rather than the measurement itself.\footnote{Local SHAP
explanations enable households to understand which appliances actively drive
predicted demand peaks or price-relevant forecasts, as opposed to devices that
merely consume energy without influencing the prediction. For example, a
household may observe that a washing machine has lower average energy
consumption than electric heating, yet receives a higher SHAP contribution
during peak-price hours because its operation coincides with critical demand
periods. This distinction cannot be inferred from raw consumption alone, but is
immediately visible through SHAP-based explanations. Such insights allow
households to take informed actions, such as rescheduling appliance usage,
adjusting operating durations, or prioritizing efficiency upgrades for devices
with the highest explanatory impact on demand- or price-sensitive forecasts.}
Finally, SHAP provides a normalized, additive explanation representation that can
be meaningfully aggregated across households. This property is essential for
constructing zonal-level explanations under differential privacy, whereas raw
consumption data cannot be safely or meaningfully aggregated for explainability
without revealing sensitive behavioral patterns.

\subsection{\textbf{Hierarchical Aggregation Model}}

At the zonal level, individual household explanations are never revealed.
Instead, zonal explanations are constructed by aggregating differentially
private local SHAP values,
\begin{equation}
    \medmath{
    \mathrm{SHAP}_{\mathrm{zonal}}
    =
    \sum_{i=1}^{H}
    \left(
        \sum_{t=1}^{T}
        \mathrm{SHAP}_{\mathrm{local},i}(x_{i,t})
        + \eta_i
    \right),}
\end{equation}
where $\eta_i \in \mathbb{R}^d$ denotes a Laplace noise vector calibrated
according to the household’s negotiated privacy loss parameter
$\varepsilon_i^{\mathrm{adjust}}$. This hierarchical structure ensures that
explainability at the household level remains noise-free, while explainability
at the zonal level is protected by differential privacy, and only aggregated
explanations contribute to downstream analysis. Each household $i$ selects an
initial privacy constraint $\varepsilon_i^{\mathrm{init}}$ and receives a
utility--risk estimate derived from the anticipated explainability output. We
distinguish clearly between \emph{households} and \emph{querying entities}. A
household $i$ denotes a data owner equipped with a Private Data Box and
associated privacy constraints, whereas a querying entity $u$ refers to an
authorized external actor (e.g., an energy service provider) that issues
explainability queries under an Information Control Agreement (ICA). A
negotiation mechanism determines an adjusted household privacy level
\begin{equation}
    \medmath{\varepsilon_i^{\mathrm{adjust}}
    \in (\varepsilon_i^{\mathrm{opt}} - \ell,\;
          \varepsilon_i^{\mathrm{opt}}),}
\end{equation}
where $\varepsilon_i^{\mathrm{opt}}$ denotes the household-specific privacy loss
level that maximizes explanation utility subject to declared privacy
preferences. In practice, this value is obtained via bounded search or
feedback-based negotiation, where the household evaluates the utility--risk
trade-off of candidate privacy levels proposed by the querying entity. The
relaxation parameter $\ell$ defines a small admissible interval around
$\varepsilon_i^{\mathrm{opt}}$ to allow flexibility under repeated queries and
to ensure stable operation over time. Importantly, participation in the
negotiation process is voluntary and bounded by household-defined privacy
limits. A household is never forced to accept a weaker privacy guarantee than
its stated preference; if no agreement is reached, the system defaults to
stricter privacy with reduced explanatory utility.

\subsection{\textbf{Query Model}}
We distinguish between two classes of queries issued by querying entities:
\textbf{General Queries ($q^{\mathrm{gen}}$)} aggregate zonal summaries (e.g.,
demand trends, peak shapes, dominant appliance categories) that incur a lower
privacy cost than appliance-level queries, while \textbf{Specific Queries
($q^{\mathrm{spec}}$)} perform appliance-level diagnostics, behavioral
elasticity analysis, and evaluation of incentive effects. Due to their finer
granularity, these queries incur a higher privacy cost $\varepsilon$. For a
sequence of $n$ queries affecting household $i$, the cumulative privacy loss is
defined as
\begin{equation}
    \medmath{
    L_i^{\mathrm{total}} = \sum_{j=1}^{n} \varepsilon_{i,j}^{\mathrm{adjust}}.}
\end{equation}
Once the cumulative privacy loss $L_i^{\mathrm{total}}$ reaches or exceeds the
negotiated threshold, further queries affecting household $i$ are automatically
denied by the Private Data Box.



\section{Methodology and Theoretical Foundations}

HXAI integrates household-level explainability, differentially private zonal
aggregation, and multi-query privacy accounting into a unified hierarchical
architecture. This section formalizes the computational pipeline, introduces
theoretical guarantees for privacy-preserving explainability, and describes the
differential privacy mechanisms, aggregation rules, and privacy-budget
negotiation strategy used to enable explainable and privacy-safe analysis of
distributed energy demand. Throughout the analysis, we assume bounded
SHAP attribution vectors with finite $\ell_1$ norm and non-degenerate zonal
explanations ($\|S_{\mathrm{true}}\|_2 \ge c > 0$). This non-degeneracy assumption
excludes trivial cases in which all appliance-level contributions vanish at the
zonal level and holds whenever at least one appliance exhibits non-negligible
explanatory influence on aggregate demand, which is typical in realistic energy
systems. All expectations are taken with respect to the randomness of the
Laplace mechanism, conditioned on the observed household datasets. We begin at the household level, where each household $i$, for
$i \in \{1,\dots,H\}$, models and explains its own energy consumption using local
data and computation within its Private Data Box.

\begin{definition}[\textbf{Local Explainability}]
Let $f_{\mathrm{local},i}$ denote the prediction model of household $i$ and let
\begin{equation}
    \medmath{\widehat{Y}_{i,t} = f_{\mathrm{local},i}(x_{i,t}),}
\end{equation}
where $x_{i,t} \in \mathbb{R}^d$ represents appliance-level readings at time $t$.
Local explainability is achieved by computing SHAP attributions
\begin{equation}
    \medmath{\phi_{i,k,t} = \mathrm{SHAP}_{f_{\mathrm{local},i}}(x_{i,k,t}),}
\end{equation}
where $\phi_{i,k,t}$ captures how strongly the predictive model attributes
relevance to appliance $k$ at time $t$, accounting for feature interactions and
temporal context. Multiplying by the observed consumption yields the
model-weighted explanatory contribution of appliance $k$ to the prediction. The model-weighted explanation at time $t$ is given by,
\begin{equation}
    \medmath{E_{i,t} = \sum_{k=1}^{d} \phi_{i,k,t} \, x_{i,k,t}.}
\end{equation}
\end{definition}
Thus, $E_{i,t}$ provides an additive decomposition of the \emph{predicted}
consumption at time $t$ into appliance-level \emph{explanatory contributions},
answering the question: which appliances most strongly influenced the model’s
demand prediction, and by how much. Importantly, these contributions reflect
model attribution rather than raw energy usage.
All explanations remain noise-free and confined within the household’s Private
Data Box, supporting local interpretability while enabling downstream
privacy-preserving aggregation. The additive structure of the explanation is
crucial: it allows appliance-level attributions to be meaningfully aggregated
across households under differential privacy, whereas raw consumption values
lack a compositional, model-aware explanation. All computations
in Definition~1 are executed locally under the constraints of the approved
ICA.
Throughout the remainder of this paper, and with slight abuse of notation,
$\phi_{i,k,t}$ denotes the model-weighted contribution
$\phi_{i,k,t}\,x_{i,k,t}$ whenever aggregation is performed.

\begin{definition}[\textbf{Differentially Private Household Explanation}]
Let $\phi_{i,t} = (\phi_{i,1,t}, \ldots, \phi_{i,d,t})$ denote the appliance-level
SHAP attribution vector of household $i$ at time $t$. The household summary
statistic is defined as
\begin{equation}
    \medmath{S_i = \sum_{t=1}^{T} \phi_{i,t},}
\end{equation}
where $S_i \in \mathbb{R}^d$ summarizes the cumulative explanatory influence of
each appliance over time. A differentially private household explanation is
obtained via the Laplace mechanism,
\begin{equation}
    \medmath{\widetilde{S}_i = S_i + \eta_i,}
\end{equation}
where $\eta_i \in \mathbb{R}^d$ is a noise vector with independent components
\begin{equation}
    \medmath{\eta_{i,k} \sim \mathrm{Lap}\!\left(\frac{\Delta S}{\varepsilon_i^{\mathrm{adjust}}}\right),
    \qquad k = 1, \ldots, d.}
\end{equation}
Here, $\Delta S$ denotes the $\ell_1$-sensitivity of the household summary, and
$\varepsilon_i^{\mathrm{adjust}}$ is the negotiated and ICA-approved privacy
budget enforced locally by the Private Data Box.
\end{definition}
For notational simplicity, $\phi_{i,t}$ is used to denote the model-weighted
appliance-level explanation vector, absorbing the multiplication by
$x_{i,k,t}$. The vector $S_i$ encodes which appliances systematically influence the
household’s predicted demand, while $\widetilde{S}_i$ ensures this information
can be shared externally without violating household privacy. Having established household-level explainability, we now formalize how explanations from multiple households are combined to produce privacy-preserving insights at the zonal level.
The formal derivation of the $\ell_1$-sensitivity $\Delta S$ and its dependence
on bounded SHAP attributions is provided in Appendix~\ref{app:sensitivity}.
\begin{definition}[\textbf{Privacy--Utility Negotiation Objective}]
For each household $i$, privacy-budget negotiation is modeled as the
scalarized optimization problem
\begin{equation}
    \medmath{\max_{\varepsilon_i} \; U(\varepsilon_i) - \lambda_i \varepsilon_i,}
\end{equation}
where $U(\varepsilon_i)$ denotes explanation utility at privacy level
$\varepsilon_i$, and $\lambda_i$ captures household-specific privacy preference.
In practice, this objective is solved heuristically through bounded search and
feedback, but the formulation provides a principled basis for negotiation.
\end{definition}

\begin{definition}[\textbf{Zonal Explanation}]
Let $\widetilde{S}_i$ denote the differentially private explanation vector
released by household $i$. The zonal explanation is defined as
\begin{equation}
    \medmath{S_{\mathrm{zonal}} = \sum_{i=1}^{H} \widetilde{S}_i,}
\end{equation}
yielding an appliance-aware, differentially private summary of regional demand
behavior.
\end{definition}
The Zonal Aggregator operates exclusively on differentially private household
summaries and never accesses raw data or noise-free local explanations. While
differential privacy does not preclude all forms of inference, it provides a
formal bound on the information that can be inferred about any individual
household. In HXAI, this risk is further mitigated by (i) aggregation across
multiple independently perturbed households and (ii) strict query-aware
privacy-budget enforcement, which prevents adversaries from averaging out noise
through repeated queries. Consequently, any attempted reverse inference remains
provably bounded and cannot exceed the negotiated privacy guarantees.

\begin{theorem}[\textbf{Unbiasedness of Aggregated DP Explanations}]
Let $\widetilde{S}_i$ be differentially private household summaries generated
using the Laplace mechanism. Then
\begin{equation}
    \medmath{S_{\mathrm{zonal}} = \sum_{i=1}^{H} \widetilde{S}_i}
\end{equation}
is an unbiased estimator of the true zonal SHAP contribution
$\sum_{i=1}^{H} S_i$.
\end{theorem}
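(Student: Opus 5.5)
The proof rests on linearity of expectation together with the zero-mean property of the Laplace distribution. As fixed at the start of this section, all expectations are over the Laplace randomness only, conditioned on the observed household datasets. So each household summary $S_i=\sum_{t=1}^T \phi_{i,t}$ is a deterministic vector in $\mathbb{R}^d$. It is finite because we assume bounded SHAP attribution vectors with finite $\ell_1$ norm.

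\textbf{Step 1 (per-household unbiasedness).} By the Definition of Differentially Private Household Explanation, $\widetilde{S}_i = S_i + \eta_i$, where each $\eta_{i,k}\sim \mathrm{Lap}(b_i)$ with scale $b_i=\Delta S/\varepsilon_i^{\mathrm{adjust}}$. We have $b_i<\infty$ because $\varepsilon_i^{\mathrm{adjust}}>0$, which the negotiation interval guarantees, and $\Delta S<\infty$ by Appendix~\ref{app:sensitivity}. The density $\frac{1}{2b_i}e^{-|z|/b_i}$ is symmetric about $0$ and has finite first absolute moment $\mathbb{E}|\eta_{i,k}|=b_i$. Hence $\mathbb{E}[\eta_{i,k}]=0$ for every $k$, and so $\mathbb{E}[\widetilde{S}_i]=S_i$ componentwise.

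\textbf{Step 2 (aggregation).} By the Zonal Explanation definition, $S_{\mathrm{zonal}}=\sum_{i=1}^H \widetilde{S}_i$. This is a finite sum of integrable random vectors, so linearity of expectation gives
\begin{equation}
\medmath{\mathbb{E}[S_{\mathrm{zonal}}]=\sum_{i=1}^H S_i+\sum_{i=1}^H \mathbb{E}[\eta_i]=\sum_{i=1}^H S_i.}
\end{equation}
Independence across households or components is not needed for this step. It matters only for the variance, $\mathrm{Var}(S_{\mathrm{zonal},k})=\sum_i 2b_i^2$, which I would mention as a remark.

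\textbf{Main obstacle.} The calculation itself is routine. What needs care is making the conditioning explicit. If $\varepsilon_i^{\mathrm{adjust}}$ were selected through a negotiation that depends on the data or on earlier noisy outputs, then $b_i$ would be random. In that case I would condition on the negotiated budgets and use the tower property: $\mathbb{E}[\eta_{i,k}\mid b_i]=0$ implies $\mathbb{E}[\eta_{i,k}]=0$. This works provided the fresh Laplace draw is independent of the selection given $b_i$, which holds because noise is sampled inside the Private Data Box after the ICA is approved.

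Heterogeneous appliance sets cause no difficulty. Absent devices are embedded as zeros in $S_i$, and the noise on those coordinates still has mean zero.
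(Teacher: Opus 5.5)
Your proof is correct and takes the same route as the paper: write $\widetilde{S}_i = S_i + \eta_i$, use that the Laplace noise has zero mean, and apply linearity of expectation over the finite sum of households. The extra care you take (integrability of the Laplace noise, conditioning on possibly data-dependent negotiated budgets via the tower property, and noting that independence matters only for the variance) goes beyond the paper's one-line argument without changing the approach.
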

\begin{proof}
Each summary is released as $\widetilde{S}_i = S_i + \eta_i$, where $\eta_i$ has
zero mean. Taking expectation,
\begin{equation}
    \medmath{\mathbb{E}[S_{\mathrm{zonal}}]
= \sum_{i=1}^{H} \mathbb{E}[S_i + \eta_i]
= \sum_{i=1}^{H} S_i,}
\end{equation}
which establishes unbiasedness.
\end{proof}
Although privacy noise perturbs individual household summaries, zonal
explanations remain correct in expectation, ensuring that planning,
monitoring, and diagnostic decisions based on aggregated explanations are not
systematically biased by differential privacy.
\begin{theorem}[\textbf{Error Bound of Aggregated DP-SHAP}]
The mean absolute deviation (MAD) of the zonal explanation from the true zonal SHAP sum satisfies
\begin{equation}
    \medmath{\mathbb{E}\!\left[
    \left\| S_{\mathrm{zonal}} - \sum_{i=1}^{H} S_i \right\|_1
\right]
\le
\sum_{i=1}^{H} \frac{d \, \Delta S}{\varepsilon_i^{\mathrm{adjust}}},}
\end{equation}
where $d$ is the number of appliances, $\Delta S$ is the $\ell_1$-sensitivity of
the household summary, and $\varepsilon_i^{\mathrm{adjust}}$ is the negotiated
privacy budget of household $i$.
\label{thm2}
\end{theorem}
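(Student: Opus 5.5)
The argument reduces directly to properties of the Laplace noise. By Definition~2 and the zonal definition,
\[
S_{\mathrm{zonal}} - \sum_{i=1}^{H} S_i = \sum_{i=1}^{H} \eta_i ,
\]
so the deterministic household summaries $S_i$ (we condition on the observed datasets, as stated at the start of this section) cancel exactly. Only the noise vectors remain, and the claim becomes a statement about $\mathbb{E}\bigl[\|\sum_i \eta_i\|_1\bigr]$.

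The first step applies the triangle inequality twice: across households and across coordinates. This gives
\[
\Bigl\|\sum_{i=1}^{H}\eta_i\Bigr\|_1 \le \sum_{i=1}^{H}\|\eta_i\|_1 = \sum_{i=1}^{H}\sum_{k=1}^{d}|\eta_{i,k}| .
\]
Taking expectations and using linearity of expectation, which requires no independence, reduces everything to computing $\mathbb{E}|\eta_{i,k}|$ for a single Laplace coordinate.

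The second step is that single-coordinate computation. If $X\sim\mathrm{Lap}(b)$, then $|X|$ is exponential with mean $b$, so $\mathbb{E}|X| = b$. This follows in one line from $\int_{-\infty}^{\infty}\frac{|x|}{2b}e^{-|x|/b}\,dx = b$. With $b = \Delta S/\varepsilon_i^{\mathrm{adjust}}$, each household therefore contributes exactly $d\,\Delta S/\varepsilon_i^{\mathrm{adjust}}$, and summing over $i$ yields the stated bound.

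There is no real obstacle here. The only point needing care is the direction of the inequality: the bound is loose precisely because of the triangle inequality across households, since independent noise partially cancels. A tighter estimate of order $\sqrt{\sum_i (\Delta S/\varepsilon_i^{\mathrm{adjust}})^2}$ per coordinate would follow from Jensen's inequality and the Laplace variance $2b^2$, but it is not needed for the theorem.

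I would also note explicitly that the argument uses only the zero-location and scale of each $\eta_{i,k}$, and not independence across households or coordinates. The result therefore holds even if the noise vectors are correlated, as long as each coordinate is marginally $\mathrm{Lap}(\Delta S/\varepsilon_i^{\mathrm{adjust}})$.
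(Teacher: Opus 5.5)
Your proposal is correct and follows the same route as the paper: both reduce the deviation to the Laplace noise alone, use $\mathbb{E}|\eta_{i,k}| = \Delta S/\varepsilon_i^{\mathrm{adjust}}$, and sum over the $d$ coordinates and $H$ households. You also make explicit the triangle-inequality step, and the fact that independence is not needed, both of which the paper's one-line ``summing'' leaves implicit.
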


\begin{proof}
Each coordinate of $\widetilde{S}_i$ is perturbed by independent Laplace noise with
scale $\Delta S / \varepsilon_i^{\mathrm{adjust}}$.
For Laplace noise, the mean absolute deviation satisfies
\begin{equation}
    \medmath{\mathbb{E}[|\eta|] = \frac{\Delta S}{\varepsilon_i^{\mathrm{adjust}}}.}
\end{equation}
Summing over $d$ coordinates and $H$ households yields the stated bound.
\end{proof}
We assume that the household explanation summary $S_i$ has bounded
$\ell_1$-sensitivity $\Delta S$ with respect to a single time-step change.
This result explicitly characterizes the privacy–utility trade-off: tighter
privacy budgets increase noise, while higher $\varepsilon_i^{\mathrm{adjust}}$
values preserve explanation fidelity. Importantly, the bound scales linearly and
remains predictable, ensuring that zonal explanations remain actionable for
grid-level diagnostics and planning.

\begin{definition}[\textbf{Zonal Explanation Utility}]
Let $S_{\mathrm{true}} = \sum_{i=1}^{H} S_i$ denote the noiseless zonal SHAP
explanation and let $S_{\mathrm{zonal}}$ be its differentially private
counterpart.
The utility of the zonal explanation is defined as the cosine similarity
\begin{equation}
  \medmath{ U =
\frac{
    S_{\mathrm{true}}^\top S_{\mathrm{zonal}}
}{
    \| S_{\mathrm{true}} \|_2 \, \| S_{\mathrm{zonal}} \|_2
}.}
\end{equation}
\end{definition}

This definition captures how well the relative importance of appliances is
preserved under privacy noise. Cosine similarity is chosen because it captures
directional alignment and relative feature importance while being invariant to
scale, making it well suited for evaluating semantic stability under additive
noise.

\begin{theorem}[\textbf{Utility degradation under differential privacy}]
Assume that zonal explanation vectors have bounded norm, i.e.,
$\|S_{\mathrm{true}}\|_2 \ge c > 0$, and that each household summary has bounded
$\ell_1$-sensitivity $\Delta S$. Under the Laplace mechanism, the expected cosine
similarity between the private zonal explanation $S_{\mathrm{zonal}}$ and the
true explanation $S_{\mathrm{true}}$ satisfies
\begin{equation}
    \medmath{\mathbb{E}[U] \ge
    1 - \frac{C}{\|S_{\mathrm{true}}\|_2}
    \sum_{i=1}^{H} \frac{d\,\Delta S}{\varepsilon_i^{\mathrm{adjust}}},}
\end{equation}
for a constant $C > 0$.
\end{theorem}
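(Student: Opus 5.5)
Write $a = S_{\mathrm{true}}$, $\eta = \sum_{i=1}^{H}\eta_i$, so that $S_{\mathrm{zonal}} = a + \eta$. The plan is to bound the cosine similarity pathwise by a quantity that depends only on $\|\eta\|_2/\|a\|_2$, and then take expectations using Theorem~\ref{thm2}.

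\textbf{Step 1: a deterministic lower bound.} For any vectors $a \neq 0$ and $b = a + \eta$, the deviation of $b/\|b\|_2$ from $a/\|a\|_2$ is controlled by $\|\eta\|_2$. Concretely,
\begin{equation}
\medmath{1 - U = \tfrac12\Big\|\tfrac{a}{\|a\|_2} - \tfrac{b}{\|b\|_2}\Big\|_2^2 \le \tfrac12\Big(\tfrac{2\|\eta\|_2}{\|a\|_2}\Big)^2 .}
\end{equation}
Here I use $\big\|\tfrac{a}{\|a\|}-\tfrac{b}{\|b\|}\big\| \le \tfrac{2\|a-b\|}{\|a\|}$. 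Combining this with the trivial bound $1-U \le 2$ gives
\begin{equation}
\medmath{1-U \le \min\{2,\ 2\|\eta\|_2^2/\|a\|_2^2\} \le 2\|\eta\|_2/\|a\|_2 .}
\end{equation}
The last inequality holds because $\min\{2, 2x^2\} \le 2x$ for all $x \ge 0$. This linear, rather than quadratic, form is what matches the shape of the claimed bound. The same estimate covers the case $b = 0$, which occurs with probability zero, by the convention $U = 0$: then $\|\eta\|_2 = \|a\|_2$ and the right-hand side equals $2$. The nondegeneracy assumption $\|a\|_2 \ge c > 0$ ensures that $U$ is well defined and that the ratio is finite.

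\textbf{Step 2: take expectations.} Using $\|\eta\|_2 \le \|\eta\|_1$ and the triangle inequality,
\begin{equation}
\medmath{\mathbb{E}\|\eta\|_1 \le \sum_{i=1}^{H}\mathbb{E}\|\eta_i\|_1 = \sum_{i=1}^{H} \frac{d\,\Delta S}{\varepsilon_i^{\mathrm{adjust}}}.}
\end{equation}
This is exactly the content of Theorem~\ref{thm2}. Since $a$ is deterministic, because expectations are conditioned on the household data, we obtain
\begin{equation}
\medmath{\mathbb{E}[U] \ge 1 - \frac{2}{\|S_{\mathrm{true}}\|_2}\sum_{i=1}^{H}\frac{d\,\Delta S}{\varepsilon_i^{\mathrm{adjust}}}.}
\end{equation}
This is the claim with $C = 2$. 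The passage from $\ell_2$ to $\ell_1$ loses a factor of up to $\sqrt d$; it is deliberately accepted so that the bound reuses Theorem~\ref{thm2} verbatim.

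\textbf{Main obstacle.} The delicate step is Step~1. Cosine similarity is not Lipschitz near $b = 0$, so a naive perturbation bound fails when the noise is comparable to $\|a\|_2$. I will handle this by proving the elementary inequality $\big\|\tfrac{a}{\|a\|}-\tfrac{b}{\|b\|}\big\| \le \tfrac{2\|a-b\|}{\|a\|}$ directly. Insert the intermediate point $b/\|a\|$ and apply the triangle inequality:
\begin{equation}
\medmath{\Big\|\tfrac{a}{\|a\|}-\tfrac{b}{\|a\|}\Big\| = \tfrac{\|a-b\|}{\|a\|}, \qquad \Big\|\tfrac{b}{\|a\|}-\tfrac{b}{\|b\|}\Big\| = \tfrac{\big|\|b\|-\|a\|\big|}{\|a\|} \le \tfrac{\|a-b\|}{\|a\|}.}
\end{equation}
I will then use the capped form $\min\{2,\cdot\}$, which makes the bound hold uniformly, including when the noise dominates; in that regime the statement is vacuous, since its right-hand side is then at most $-1 \le U$.
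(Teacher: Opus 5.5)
Your proposal is correct and follows the same route as the paper: bound the loss $1-U$ by a multiple of the relative noise magnitude, then invoke Theorem~\ref{thm2} to control $\mathbb{E}\|\sum_{i}\eta_i\|_1$. The paper only appeals informally to the stability of cosine similarity under \emph{small} perturbations, whereas your pathwise inequality $1-U\le\min\{2,2x^2\}\le 2x$, with $x=\|\sum_{i}\eta_i\|_2/\|S_{\mathrm{true}}\|_2$, makes that step rigorous at every noise level (including near $S_{\mathrm{zonal}}=0$) and gives the explicit constant $C=2$.
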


\begin{proof}
From Theorem~1 and Theorem~2, the expected deviation between
$S_{\mathrm{zonal}}$ and $S_{\mathrm{true}}$ scales linearly with the magnitude
of aggregated Laplace noise.
Since cosine similarity is stable under small additive perturbations, the
expected reduction in similarity is proportional to the relative noise
magnitude, yielding the stated bound. We assume non-degenerate zonal
explanations with bounded norm, which holds in practice when aggregating over
multiple households.
\end{proof}

The above results justify HXAI’s architectural choice to aggregate
explanations prior to privacy injection. Unbiasedness ensures no
systematic distortion, while linear error growth enables predictable
privacy–utility negotiation. Crucially, cosine stability explains the
empirical preservation of semantic structure (rank, sign, dominance)
observed in Section~V, even under strong privacy constraints.
As privacy budgets increase, the relative impact of noise diminishes, and zonal
explanations increasingly align with the true explanation. Hence, the expected
zonal explanation utility $\mathbb{E}[U]$ is monotone non-decreasing in each
$\varepsilon_i^{\mathrm{adjust}}$.

In operational settings, grid operators may issue multiple explanation queries
over time. We therefore formalize how privacy leakage accumulates and how
overexposure is prevented.
By Lemma~\ref{lem} (Appendix) and Theorem~\ref{thm2}, the expected perturbation
norm is bounded by $\sum_i d\Delta S / \varepsilon_i^{\mathrm{adjust}}$,
yielding the stated result.

Let $Q = \{ q_1, q_2, \dots, q_n \}$ denote a sequence of queries issued to
household $i$, where each query $q_j$ consumes a negotiated privacy budget
$\varepsilon_{i,j}^{\mathrm{adjust}}$. We use privacy leakage and cumulative
privacy budget interchangeably, with
$L_i^{\mathrm{total}} \equiv \varepsilon_i^{\mathrm{total}}$ and
$L_i^{\max} \equiv \varepsilon_i^{\max}$.

\begin{definition}[\textbf{Cumulative Privacy Leakage}]
The total privacy leakage incurred by household $i$ after $n$ queries is
\begin{equation}
   \medmath{
   L_i^{\mathrm{total}} =
   \sum_{j=1}^{n} \varepsilon_{i,j}^{\mathrm{adjust}}.}
\end{equation}
\end{definition}

\begin{definition}[\textbf{Leakage Threshold Enforcement}]
A household $i$ denies any further query once
\begin{equation}
    \medmath{L_i^{\mathrm{total}} \ge L_i^{\mathrm{max}},}
\end{equation}
where $L_i^{\mathrm{max}}$ is the household’s negotiated leakage threshold.
\end{definition}

\begin{theorem}[\textbf{Leakage Upper Bound for Arbitrary Query Sequences}]
For any (possibly adaptive) sequence of queries $Q$, the total privacy leakage
from household $i$ is bounded by the negotiated threshold
$L_i^{\mathrm{max}}$. No additional leakage occurs beyond this point.
\end{theorem}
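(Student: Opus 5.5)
The argument combines the adaptive sequential composition theorem of differential privacy with an admission-control invariant enforced by the Private Data Box. Privacy leakage is identified with the cumulative budget, $L_i^{\mathrm{total}} \equiv \varepsilon_i^{\mathrm{total}}$, as stated above, so two things must be shown. First, the true privacy loss of the composed interaction is at most $\sum_j \varepsilon_{i,j}^{\mathrm{adjust}}$. Second, the enforcement rule keeps this sum at most $L_i^{\max}$.

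\emph{Composition.} Each answered query $q_j$ releases a Laplace-mechanism output $\widetilde{S}_i^{(j)} = S_i^{(j)} + \eta_i^{(j)}$ with fresh, independent noise of scale $\Delta S/\varepsilon_{i,j}^{\mathrm{adjust}}$. By Definition~2 and the sensitivity bound of Appendix~\ref{app:sensitivity}, this release is $\varepsilon_{i,j}^{\mathrm{adjust}}$-DP for household $i$.

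The query sequence may be adaptive: $q_j$, and hence the mechanism, may depend on earlier outputs. For such sequences I will use adaptive sequential composition (Dwork et al.), proved by writing the joint output density as a product of conditional densities. For neighbouring datasets, the log-likelihood ratio of the full transcript is the sum of the per-step conditional log-ratios. Each conditional log-ratio is bounded by $\varepsilon_{i,j}^{\mathrm{adjust}}$, because conditioned on the past transcript, step $j$ is a fixed Laplace mechanism. The whole interaction is therefore $\bigl(\sum_j \varepsilon_{i,j}^{\mathrm{adjust}}\bigr)$-DP.

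Post-processing covers what happens downstream. Zonal aggregation and any analysis by the honest-but-curious querying entity or aggregator are post-processing, so they add no leakage. Other households' noise is independent of household $i$'s data, so it does not affect $i$'s bound either.

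\emph{Enforcement invariant.} I will prove by induction on $n$ that $L_i^{\mathrm{total}}(n) \le L_i^{\max}$ for every sequence. Definition~7 as literally stated only denies queries \emph{after} the threshold is reached. That rule could allow an overshoot by one query's cost, which is the main obstacle. I will read the enforcement, consistent with the ICA description ("a query is executed only if its cumulative privacy cost satisfies \dots the privacy constraints of all affected households"), as an admission check performed before execution: query $q_{n+1}$ is run only if
\[
L_i^{\mathrm{total}}(n) + \varepsilon_{i,n+1}^{\mathrm{adjust}} \le L_i^{\max}.
\]
If the literal rule must be kept instead, I will state the weaker bound $L_i^{\max} + \max_j \varepsilon_{i,j}^{\mathrm{adjust}}$.

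The base case is $L_i^{\mathrm{total}}(0) = 0$. In the inductive step, an admitted query keeps the sum within $L_i^{\max}$ by the admission check. A denied query produces no output that depends on the data, so it is $0$-DP and leaves the sum unchanged. The check depends only on public budget bookkeeping, namely the negotiated $\varepsilon$ values, not on the data. The decision to deny therefore leaks nothing, and this is what allows adaptivity without extra cost.

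\emph{Conclusion.} Combining the two parts, the entire transcript, including all denials, is $L_i^{\max}$-DP. Every later query is refused, and any further computation is post-processing of the same transcript, so no additional leakage occurs beyond the threshold. The TEE assumption guarantees that the Private Data Box cannot be bypassed, so the invariant holds for any adversarial query sequence.
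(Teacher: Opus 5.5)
Your proof is correct and follows the paper's route: sequential composition, plus the fact that the Private Data Box halts once $L_i^{\mathrm{total}} = L_i^{\max}$. Your conditional-density treatment of adaptivity and the induction on the admission invariant make that two-line argument rigorous, and the paper's Algorithm~\ref{alg:hxai} already applies exactly your pre-execution check (it denies whenever $\varepsilon_i^{\mathrm{total}} + \varepsilon_{i,j}^{\mathrm{adjust}} > L_i^{\max}$), so cite it to rule out the one-query overshoot that the paper's proof skips by assuming the threshold is hit with equality.
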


\begin{proof}
By the sequential composition property of differential privacy, privacy loss
accumulates linearly with each query.
Since the household databox halts all responses once
$L_i^{\mathrm{total}} = L_i^{\mathrm{max}}$, no further information is released,
and no additional leakage is possible.
\end{proof}
This result follows directly from the standard sequential composition property
of differential privacy; for completeness, a formal statement is provided in
Appendix~\ref{app:composition} and  Appendix~\ref{app:composition2}. Together, the system model and theoretical framework establish HXAI as a
query-aware, explanation-centric architecture in which privacy guarantees are
enforced structurally rather than through post hoc noise injection.
Algorithm~\ref{alg:hxai} summarizes the end-to-end HXAI execution pipeline.

\begin{algorithm}[htbp]
\small
\caption{HXAI: Hierarchical Privacy-Preserving Explainability}
\label{alg:hxai}
\begin{algorithmic}[1]

\Require
Local household data $D_{\mathrm{local},i}$,
approved ICA $\mathrm{ICA}_u = (\mathcal{Q}_u, \varepsilon_u^{\max}, \mathcal{O}_u, T_u)$,
query sequence $Q = \{q_1,\dots,q_n\}$

\State \textbf{/* Executed inside Private Data Box (TEE) */}
\State Compute local predictions:
$\widehat{Y}_{i,t} \gets f_{\mathrm{local},i}(x_{i,t})$
\State Compute SHAP explanations:
$\phi_{i,k,t} \gets \mathrm{SHAP}_{f_{\mathrm{local},i}}(x_{i,k,t})$
\State Aggregate household summary:
$S_i \gets \sum_{t=1}^{T} \phi_{i,t}$
\State Initialize cumulative leakage:
$\varepsilon_i^{\mathrm{total}} \gets 0$

\For{each query $q_j \in Q$}
    \If{$q_j \notin \mathcal{Q}_u$}
        \State \textbf{deny query}
        \State \textbf{continue}
    \EndIf

    \State Determine privacy cost:
    $\varepsilon_{i,j}^{\mathrm{adjust}} \gets \varepsilon(q_j)$

    \If{$\varepsilon_i^{\mathrm{total}} + \varepsilon_{i,j}^{\mathrm{adjust}} > L_i^{\mathrm{max}}$}
        \State \textbf{deny query}
        \State \textbf{continue}
    \EndIf

    \State Sample noise:
    $\eta_i \sim \mathrm{Lap}\!\left(\Delta S / \varepsilon_{i,j}^{\mathrm{adjust}}\right)$
    \State Release DP summary:
    $\widetilde{S}_i \gets S_i + \eta_i$
    \State Update leakage:
    $\varepsilon_i^{\mathrm{total}} \gets
    \varepsilon_i^{\mathrm{total}} + \varepsilon_{i,j}^{\mathrm{adjust}}$
\EndFor

\State \textbf{/* Executed by Zonal Aggregator */}
\State Zonal aggregation:
$S_{\mathrm{zonal}} \gets \sum_{i=1}^{H} \widetilde{S}_i$
\State \Return $S_{\mathrm{zonal}}$

\end{algorithmic}
\end{algorithm}

\section{Experimental Evaluation and Results}
\label{sec:results}
In this section we evaluates the HXAI framework on synthetic and real-world
datasets under differential privacy (DP) constraints. We aim to see if HXAI can
maintain the \emph{semantic structure} of explanations, such as feature
importance, consistency, and explainability, while ensuring privacy. We
distinguish between \emph{primary results}, which establish the core claims of
HXAI, and \emph{supporting analyses}, which provide deeper insight into
explanation behavior under privacy.
Our focus is on explanation robustness and faithfulness, analyzing how privacy
allocation strategies affect (i) deviation from non-private explanations, (ii)
feature ranking stability, and (iii) overall utility. Results are presented as
mean values with standard deviations across multiple random seeds to ensure
robustness and reproducibility.\footnote{These results focus on explanation
quality and semantic stability under controlled, dataset-level conditions,
isolating the behavior of HXAI’s aggregation and privacy mechanisms independent
of deployment considerations.} In contrast, the following Section
\ref{sec:system_experiments} evaluates HXAI as an end-to-end system, examining
how these explanation properties translate into actionable insights, privacy
negotiation, and decision-making dynamics in realistic multi-stakeholder
settings.
\subsection{Datasets}
\label{sec:datasets}
We evaluate HXAI using three datasets that showcase different attribution
regimes. The UCI Appliances (also commonly referred to as ``Appliances Energy
Prediction'' from the UCI Machine Learning Repository)\footnote{The dataset
contains 19,735 observations of hourly residential energy usage and features a
consistent attribution structure, making it smooth and rich in redundancy.}
dataset contains 19,735 observations of hourly residential energy usage and
features a consistent attribution, making it smooth and rich in redundancy.
The UCI Household Power dataset provides approximately 2 million minute-level
electricity records collected over several years. It exhibits strong
non-stationarity and diverse feature importance, which presents challenges for
uniform privacy allocation. Lastly, the UCI Bike Sharing (Hourly) dataset
includes 17,379 hourly bike rental counts and features moderate correlation
among its 12 variables. Together, these datasets contrast homogeneous
attribution, in which importance is evenly distributed, with heterogeneous
regimes, in which certain features exert a dominant influence.
\subsection{Models, Baselines, and Metrics}
\label{sec:models_metrics}
We train gradient-boosted tree regression models on lagged time-series data to
capture temporal dependencies across various datasets, ensuring a fair
comparison through consistent model architectures. Feature attributions are
calculated using TreeSHAP\footnote{We employ tree-based models with TreeSHAP as
they provide exact Shapley attributions, ensuring that observed distortions
arise solely from privacy mechanisms rather than approximation error.} for exact
Shapley-value-based explanations, with non-private explanations serving as a
reference. We explore three explanation mechanisms: the Central DP-SHAP, which
adds Laplace\footnote{We focus on Laplace-based DP-SHAP as it represents the
dominant baseline for explanation-level differential privacy. Extensions to
Gaussian mechanisms and smooth sensitivity primarily affect noise calibration
rather than the structural limitations addressed by HXAI.} noise to aggregated
explanations for global privacy; HXAI (fixed $\epsilon$), that computes local
explanations independently and aggregates them under a uniform privacy budget;
and HXAI (negotiated $\epsilon$), which adapts privacy budgets based on user
preferences, the needs of electricity companies, attribution mass, and other
factors. This approach allows more important features to receive higher privacy
allocations while still adhering to an overall constraint. The performance of the explanations is evaluated using three metrics: Mean Absolute Error (MAE) measures the average absolute deviation between private zonal explanations and the non-private oracle; Spearman Rank Correlation assesses the stability of feature importance ordering under privacy noise; and Cosine Similarity quantifies the directional alignment between private and oracle explanations, reflecting overall utility. All metrics are evaluated over a logarithmically spaced grid of privacy budgets $\epsilon \in [0.1, 20]$, covering both strict and relaxed
privacy regimes.
\begin{figure*}[htbp]
    \centering
    \begin{minipage}[t]{0.32\linewidth}
        \centering
        \includegraphics[width=\linewidth, height=5cm]{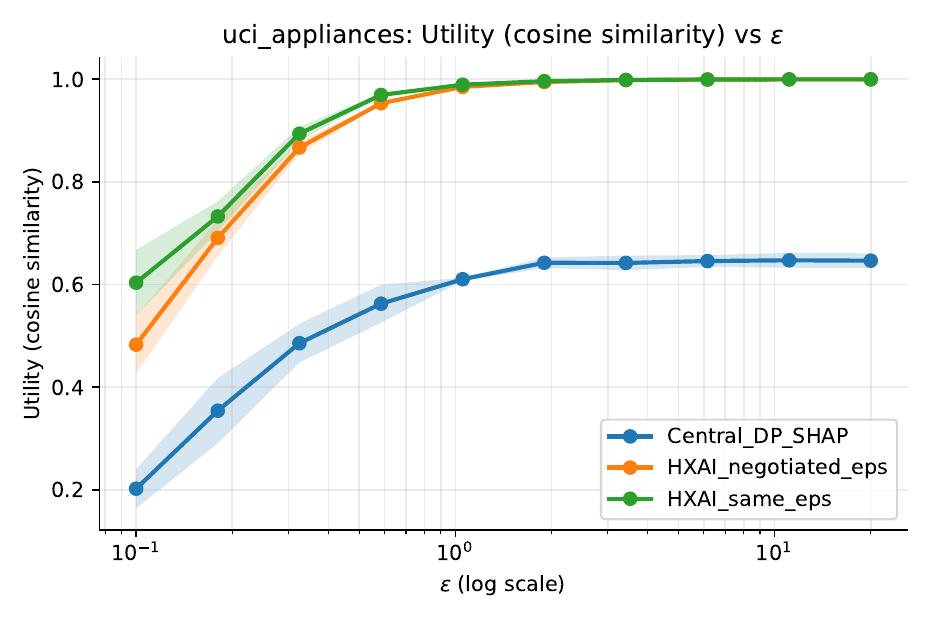}
        \\[-0.5em]
        (a) UCI Appliances
    \end{minipage}
    \hfill
    \begin{minipage}[t]{0.32\linewidth}
        \centering
        \includegraphics[width=\linewidth, height=5cm]{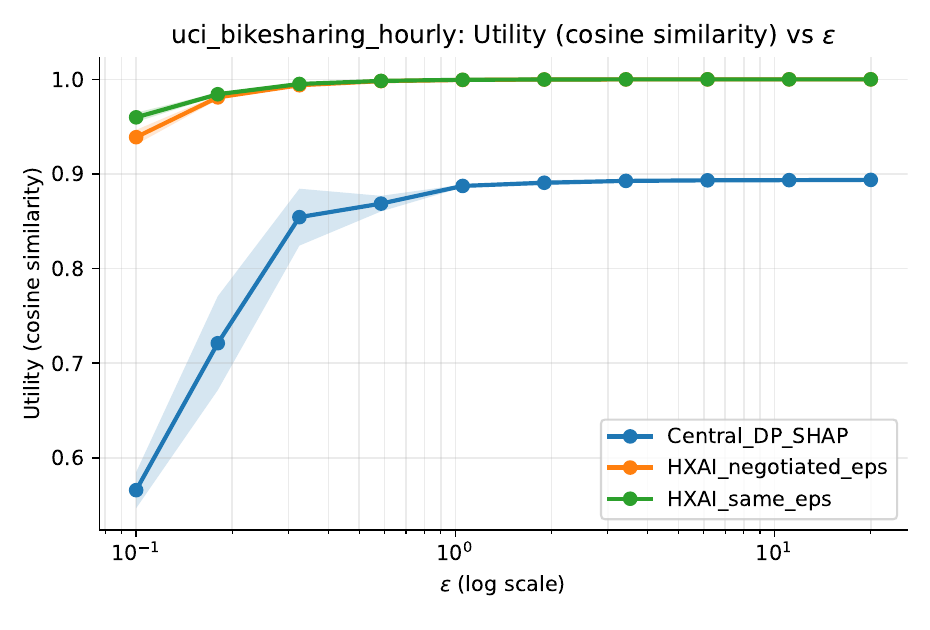}
        \\[-0.5em]
        (b) UCI Bike Sharing (Hourly)
    \end{minipage}
    \hfill
    \begin{minipage}[t]{0.32\linewidth}
        \centering
        \includegraphics[width=\linewidth, height=5cm]{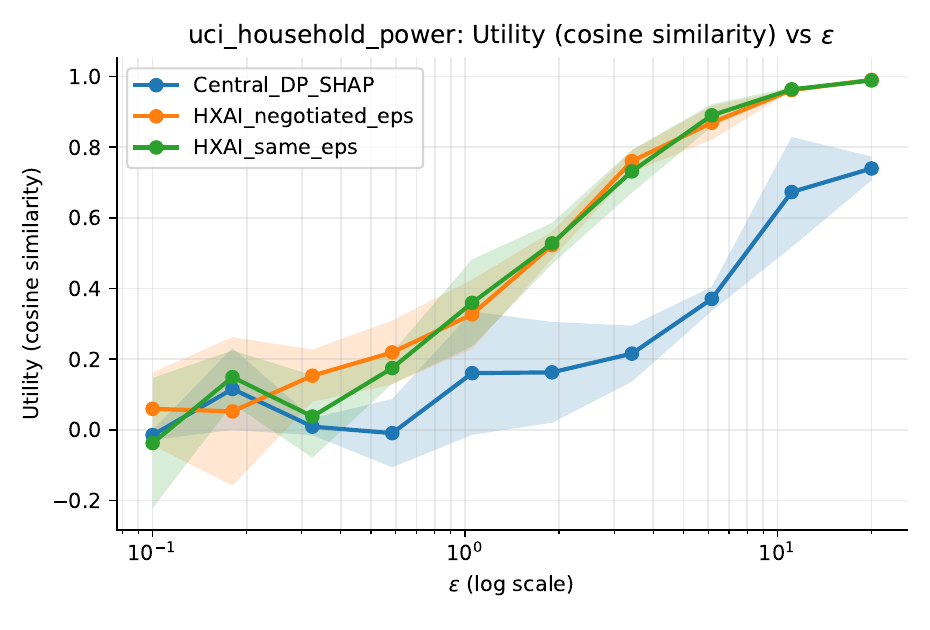}
        \\[-0.5em]
        (c) UCI Household Power
    \end{minipage}

    \caption{Explanation quality as a function of the privacy budget $\epsilon$
    across three real-world datasets. Each panel reports mean absolute error (MAE),
    ranking stability (Spearman), and cosine utility. Shaded regions indicate
    one standard deviation across random seeds.}
    \label{fig:curves_all}
\end{figure*}

\subsection{Results}
\subsubsection{\textbf{Quantitative Results}}
\label{sec:quantitative_results}
Figure~\ref{fig:curves_all} show the evolution of explanation quality with the privacy budget $\epsilon$ across three datasets. We assess mean absolute error (MAE), Spearman rank correlation, and cosine similarity, measuring numerical deviation, ranking stability, and directional utility, respectively. The shaded areas indicate one standard deviation, showing robustness to variance.
\begin{figure*}[htbp]
    \centering
    \begin{minipage}[t]{0.48\linewidth}
        \centering
        \includegraphics[width=\linewidth, height=5cm]{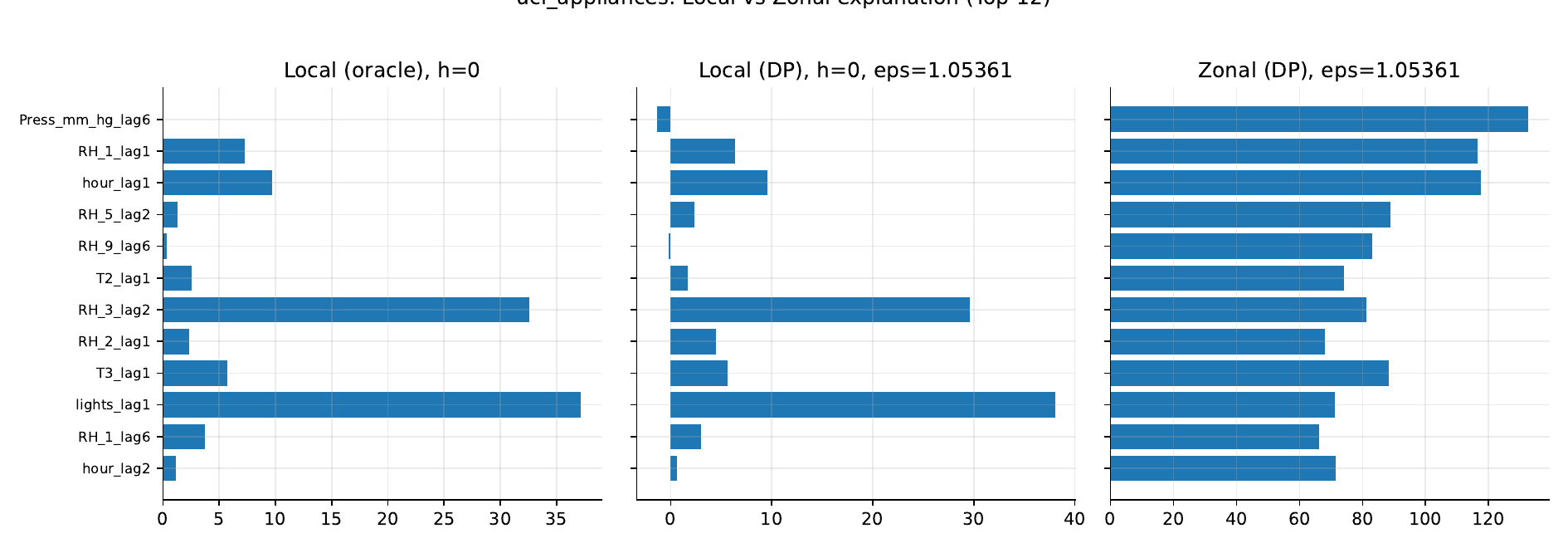}
        \\[-0.6em]
        (a) UCI Appliances
    \end{minipage}
    \hfill
    \begin{minipage}[t]{0.48\linewidth}
        \centering
        \includegraphics[width=\linewidth, height=5cm]{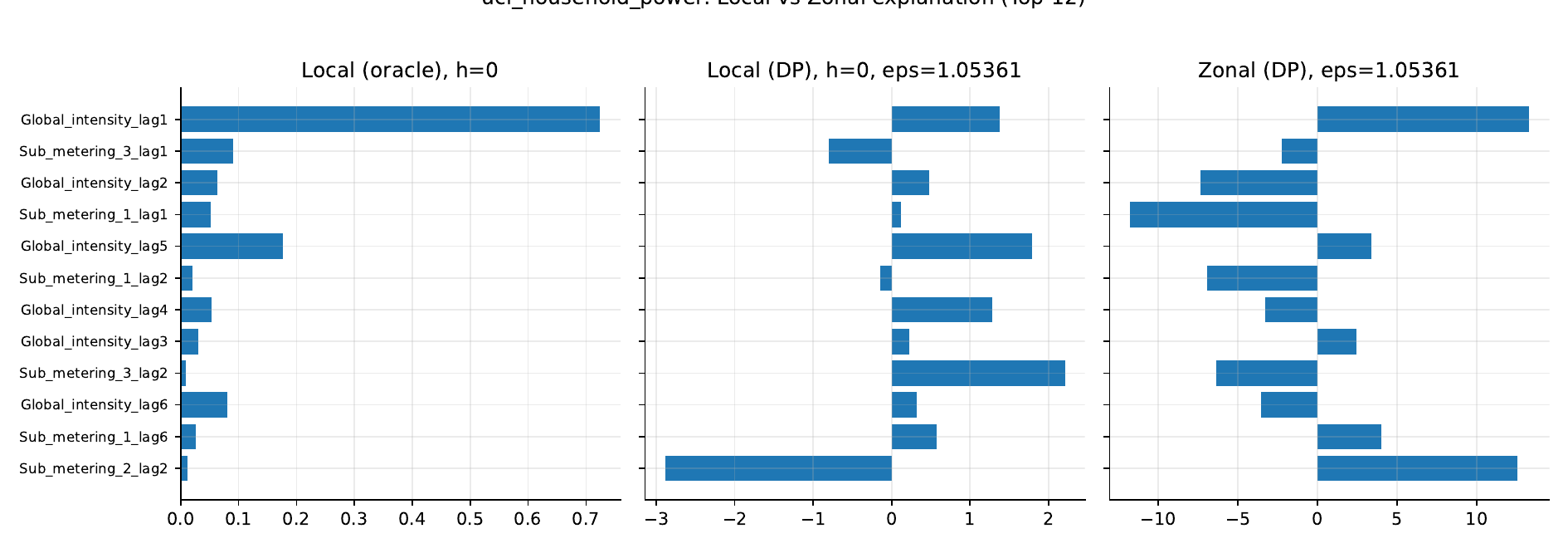}
        \\[-0.6em]
        (b) UCI Household Power
    \end{minipage}

    \caption{Comparison of oracle, local-DP, and zonal-DP SHAP bar plots at
    moderate privacy ($\epsilon \approx 1.05$). Local explanations suffer
    from sign flips and unstable ordering, while zonal HXAI explanations
    preserve dominant features and semantic structure.}
    \label{fig:bars_all}
\end{figure*}

 For the UCI Appliances and UCI Bike Sharing datasets, both HXAI variants quickly converge to the non-private oracle as $\epsilon$ increases (Fig.~\ref{fig:curves_all}(a) and Fig.~\ref{fig:curves_all}(b)), with uniform privacy allocation consistently outperforming negotiated allocation across all metrics. In contrast, the UCI Household Power dataset shows significant variability in feature importance. Fig.~\ref{fig:curves_all}(c) reveals that HXAI with negotiated privacy budgets outperforms uniform allocation, particularly at moderate privacy levels ($\epsilon \leq 2$), showcasing better cosine similarity and ranking stability. This highlights the importance of rank preservation for explainability under differential privacy. Across all datasets, Central DP-SHAP underperforms compared to HXAI variants. Although cosine similarity may improve at high $\epsilon$, the low ranking stability indicates central noise can distort relative importance. Table~\ref{tab:best_results} summarizes top configurations, showing HXAI's consistent superiority. Uniform allocation is optimal for homogeneous settings, while negotiated allocation excels in heterogeneous contexts. Overall, HXAI enhances numerical fidelity while preserving the semantic structure of explanations. We will follow this with a qualitative analysis of how local and zonal explanations evolve under privacy noise.

\begin{table}[htbp]
\centering
\scriptsize
\caption{Best explanation quality achieved per dataset and method
(mean $\pm$ std over seeds).}
\label{tab:best_results}
\begin{tabular}{l l c c c}
\toprule
Dataset & Method & Utility $\uparrow$ & Rank $\uparrow$ & MAE $\downarrow$ \\
\midrule
Appliances & HXAI (same $\epsilon$) & 0.9999 & 0.9991 & 0.25 \\
Bike & HXAI (same $\epsilon$) & 0.9999 & 0.9854 & 0.25 \\
Household & HXAI (neg. $\epsilon$) & 0.9908 & 0.4162 & 0.22 \\
\midrule
Appliances & Central DP & 0.65 & 0.45 & 29.45 \\
Household & Central DP & 0.74 & 0.17 & 0.45 \\
\bottomrule
\end{tabular}
\end{table}

\subsubsection{\textbf{Qualitative Robustness Analysis}}
\label{sec:qualitative}

While quantitative metrics measure numerical similarity and ranking stability, they do not fully capture whether explanations are explainable for human analysts. Small numerical changes can cause sign flips or reordering among top features, leading to misleading explanations. Therefore, we conduct a qualitative robustness analysis to visualize how explanations change under privacy noise. Figures~\ref{fig:bars_all} and~\ref{fig:rankflow_all} illustrate the qualitative behavior of various explanation methods across datasets. Figures~\ref{fig:bars_all} and~\ref{fig:rankflow_all} illustrate the qualitative behavior of explanation methods across datasets. Figure~\ref{fig:bars_all} compares oracle (no privacy), local-DP, and zonal-DP SHAP explanations at two privacy levels, using the oracle as a reference for true feature importance and direction.

\begin{figure*}[htbp]
    \centering
    \begin{minipage}[htbp]{0.48\linewidth}
        \centering
        \includegraphics[width=\linewidth]{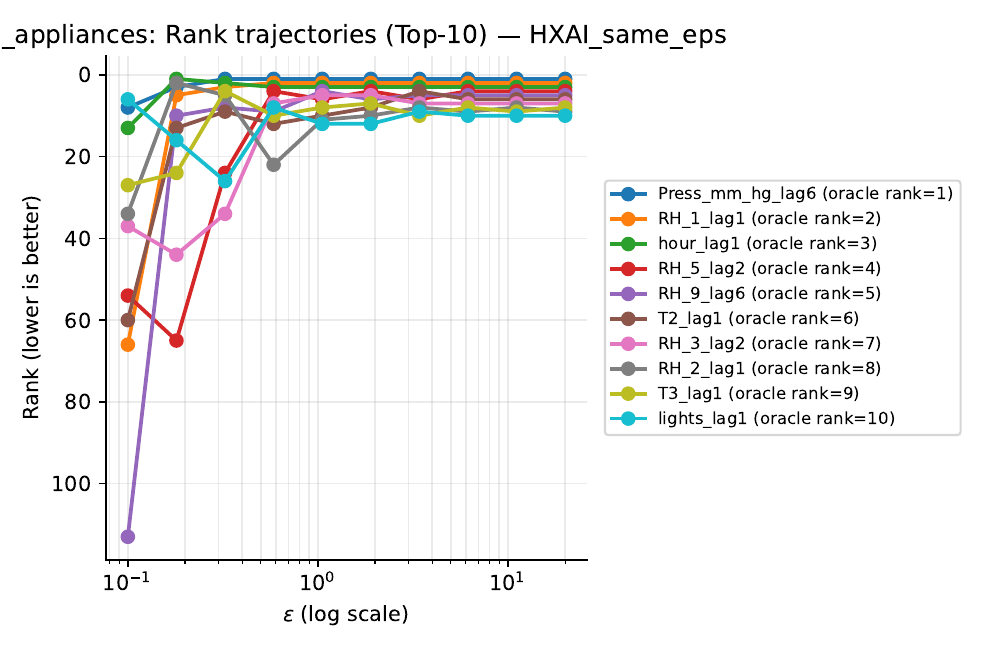}
        \\[-0.6em]
        (a) UCI Appliances
    \end{minipage}
    \hfill
    \begin{minipage}[htbp]{0.48\linewidth}
        \centering
        \includegraphics[width=\linewidth]{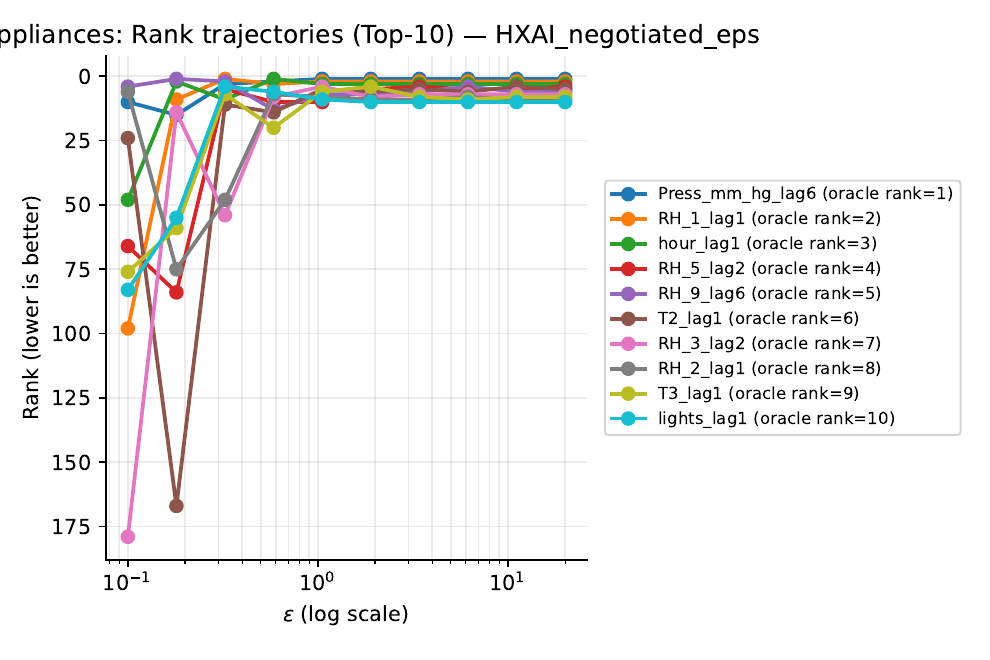}
        \\[-0.6em]
        (b) UCI Household Power
    \end{minipage}

    \caption{Top-10 feature rank-flow trajectories under increasing privacy
noise for HXAI. Near-diagonal trajectories indicate stable semantic
ordering. Negotiated privacy allocation yields tighter rank trajectories
in heterogeneous attribution regimes.}
    \label{fig:rankflow_all}
\end{figure*}

In contrast, zonal HXAI explanations exhibit substantially greater structural
stability under privacy noise. Although differential privacy attenuates the
absolute magnitudes of feature attributions, dominant features remain visually
identifiable and comparatively well ordered, preserving a clear explanatory
signal. For example, in Fig.~\ref{fig:bars_all}(a) (UCI Appliances), features
such as \textit{Press\_mm\_hg\_lag6} and \textit{RH\_1\_lag1} remain among the
top-ranked contributors in the zonal HXAI output despite noise-induced
attenuation, without disappearing or changing sign. By contrast, local-DP
explanations degrade rapidly as privacy increases: important features become
obscured, less relevant features are exaggerated, and rankings fluctuate,
leading to sign instability and loss of meaningful ordering. A similar pattern is observed in Fig.~\ref{fig:bars_all}(b) for UCI Household
Power, where \textit{Global\_intensity\_lag1} and
\textit{Sub\_metering\_3\_lag1} dominate the oracle explanation and retain their
relative ordering and direction under zonal HXAI. Local-DP explanations instead
distort this structure, inflating minor features and suppressing dominant ones,
which obscures the primary drivers of demand. Together, these results indicate
that zonal HXAI preserves semantic structure, feature identity, approximate
dominance, and directional consistency, rather than exact numerical fidelity,
yielding explanations that remain interpretable under strong privacy
constraints.

These observations demonstrate that HXAI preserves the semantic structure of
explanations, feature identity, relative ranking, and directional influence,
rather than precise numerical values. While quantitative metrics capture
average deviations (Section~\ref{sec:quantitative_results}), they do not by
themselves guarantee explainability. Local-DP may yield numerically bounded
but semantically unstable explanations, whereas zonal HXAI maintains stable
rankings and directions, enabling analysts to reliably identify what matters
most and why under strict privacy constraints. Figure~\ref{fig:rankflow_all} shows how the importance of the top ten features shifts as privacy noise increases for two HXAI variants: fixed-$\epsilon$ allocation (Fig.~\ref{fig:rankflow_all}(a)) and negotiated-$\epsilon$ allocation (Fig.~\ref{fig:rankflow_all}(b)). Despite increased privacy noise, the most important features remain in the top ranks, indicating that HXAI preserves the overall importance structure of explanations. In the fixed-$\epsilon$ setting, feature ranks show minor fluctuations, suggesting uniform privacy allocation is adequate. Conversely, the negotiated-$\epsilon$ setting exhibits tighter and more stable rank trajectories, especially for the Household Power dataset, indicating that adaptive privacy allocation better stabilizes explanations. Both panels show rank trajectories close to diagonal with few crossings, confirming that HXAI retains the relative importance of features under differential privacy while providing reliable explanations. Figure~\ref{fig:rankflow_compare_appliances}(a)-\ref{fig:rankflow_compare_appliances}(b) shows how the importance of the top ten features in the UCI Appliances dataset evolves with increasing privacy noise, comparing Central DP-SHAP and HXAI. Under Central DP-SHAP, feature rankings become unstable as noise increases, leading to frequent rank changes where dominant features may lose importance. This instability makes it difficult to identify key influences on model predictions. In contrast, HXAI maintains a stable rank structure across the privacy spectrum, with trajectories close to diagonal. This consistency, attributed to HXAI's hierarchical aggregation mechanism, highlights a limitation of centralized noise injection: while providing differential privacy, it disrupts the semantic integrity of explanations, resulting in unstable interpretations.
\begin{figure*}[htbp]
    \centering
    \begin{minipage}[htbp]{0.48\linewidth}
        \centering
        \includegraphics[width=\linewidth]{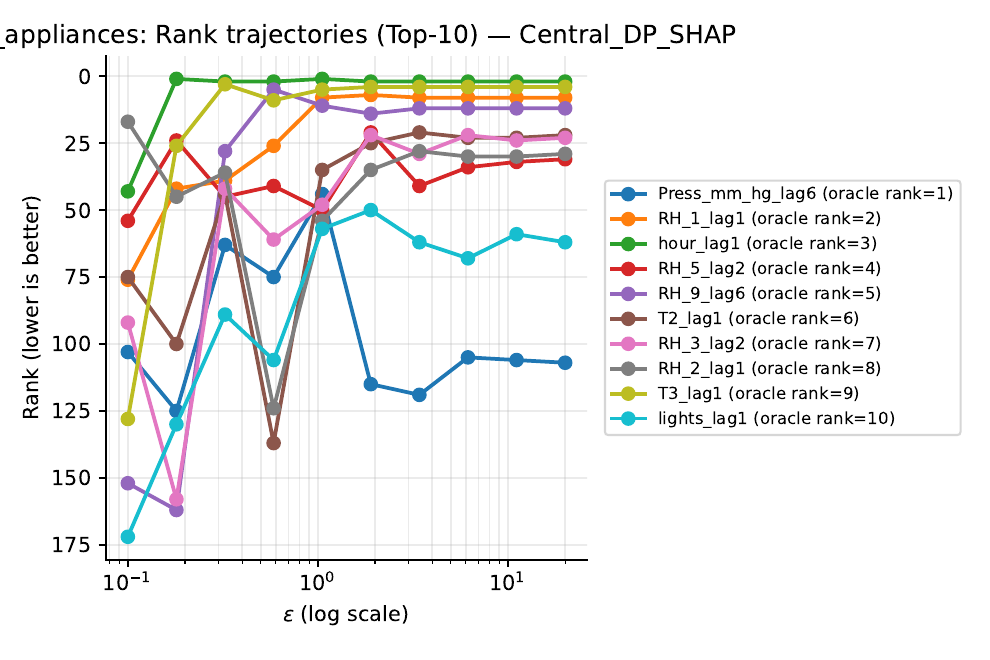}
        \\[-0.6em]
        (a) Central DP-SHAP
    \end{minipage}
    \hfill
    \begin{minipage}[htbp]{0.48\linewidth}
        \centering
        \includegraphics[width=\linewidth]{qual_uci_appliances_ranktraj_HXAI_negotiated_eps.pdf}
        \\[-0.6em]
        (b) HXAI (negotiated $\epsilon$)
    \end{minipage}

    \caption{Comparison of Top-10 feature rank-flow trajectories under
    increasing privacy noise for the UCI Appliances dataset. Central
    DP-SHAP exhibits unstable and crossing rank trajectories even in a
    homogeneous attribution regime, while HXAI preserves near-diagonal,
    stable rank structure.}
    \label{fig:rankflow_compare_appliances}
\end{figure*}
Across all datasets, HXAI consistently preserves semantic explanation structure under differential privacy, outperforming centralized DP-SHAP in ranking stability and directional consistency. Uniform privacy allocation is sufficient in homogeneous attribution regimes, while negotiated allocation provides clear benefits under heterogeneous feature importance. These results validate semantic stability, not numerical precision, as the key criterion for explainable explanations under privacy.

\section{System-Level Experimental Evaluation}
\label{sec:system_experiments}

In this section we shift focus to validating HXAI as a deployable framework for smart energy management. Rather than competing against baseline models using static datasets, we demonstrate how HXAI functions end-to-end in realistic settings involving households, zones, and energy providers, while satisfying differential privacy constraints. The experiments aim to answer questions unaddressed by offline datasets, such as the use of local explanations by households, the impact of zonal aggregation on utility under privacy constraints, the negotiation of privacy budgets, and how explainability leads to actionable interventions, such as pricing schemes and appliance upgrade incentives. The following system-level experiments demonstrate feasibility and end-to-end behavior under controlled conditions rather than field-deployed validation. In real DSO deployments, data volume, regulatory constraints, and heterogeneous household participation affect parameter calibration but not the hierarchical design of HXAI. HXAI does not require centralized model training, gradient exchange, or synchronized updates. Its computational cost scales linearly with the number of participating households and queries, making it suitable for large-scale zonal and city-level deployments.
 \begin{figure}[htbp]
    \centering
    \includegraphics[width=0.9\linewidth, height = 6 cm ]{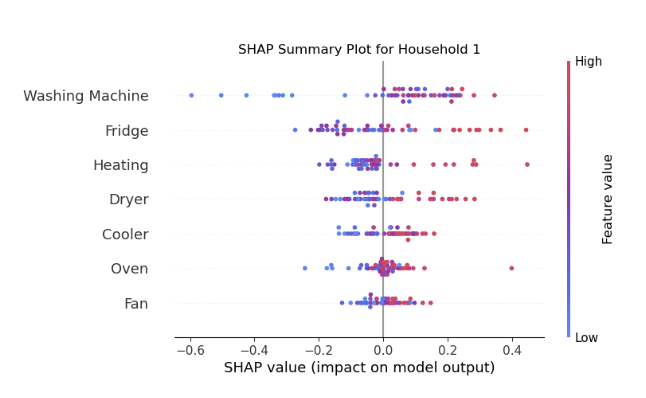}
    \caption{Noise-free local SHAP explanation for an individual household.
    The plot shows how different appliances contribute to energy
    consumption over time. This report is computed and consumed locally
    within the household’s Private Data Box and is never shared
    externally.}
    \label{shaploc}
\end{figure}
\subsection{Experimental Setup}
\label{sec:system_setup}
All experiments were conducted on a MacBook Air (Apple M1 chip) with 8 CPU cores and 8 GB RAM, running macOS 14.4.1 and Python 3.9 in a Jupyter Notebook, with privacy protections enabled. We generated energy consumption data for 50 households, each with seven IoT appliances: Heating, Fridge, Fan, Oven, Washing Machine, Dryer, and Cooler, observed at 100 time points for consistency. This design allows us to isolate system-level phenomena, like privacy budget negotiation and intervention feedback loops, that are difficult to measure in observational datasets, focusing on decision support and explainability within formal privacy guarantees.

\subsection{Local and Zonal Analysis}
\label{sec:local_zonal}
HXAI delivers stakeholder-specific explainability: households receive detailed, noise-free local insights, while providers access aggregated, privacy-preserving zonal explanations. We evaluate whether explanations remain actionable and meaningful at both levels while enforcing strict privacy guarantees.
\subsubsection{Local (End) Level Analysis}
\begin{figure}[htbp]
    \centering
    \includegraphics[width=\linewidth]{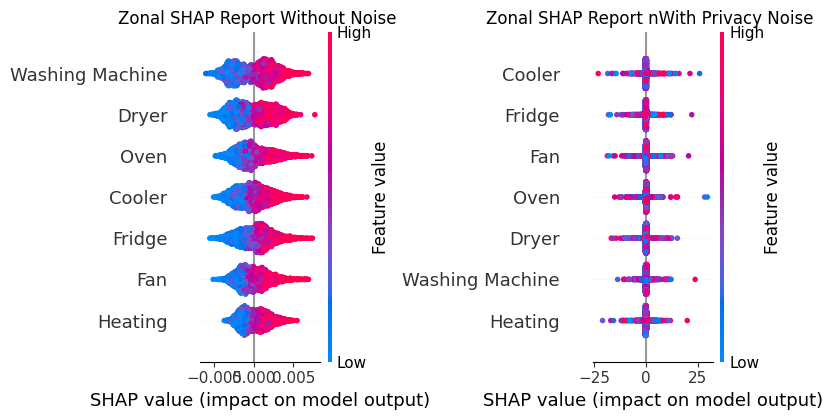}
    \caption{Zonal SHAP explanations before and after privacy-preserving
    noise application. While noise introduces small numerical
    variability, the relative importance and ordering of dominant
    appliances remain stable.}
    \label{figzonal}
\end{figure}
At the local level, HXAI provides households with full-resolution, noise-free
explanations computed entirely within the Private Data Box. These explanations
are never shared externally. While households already have access to raw
appliance-level consumption data $D_{\mathrm{local},i}$, such data only reports
how much energy each device consumes. In contrast, local SHAP explanations
quantify how individual appliances influence the model’s predictions, which is
a fundamentally different and model-dependent signal. In particular, SHAP
distinguishes appliances that actively drive predicted demand peaks or
price-relevant forecasts from those that merely consume energy without
significantly affecting predictions, information that cannot be reliably
inferred from raw consumption alone. 
\begin{figure}[htbp]
    \centering
    \includegraphics[width=0.9\linewidth]{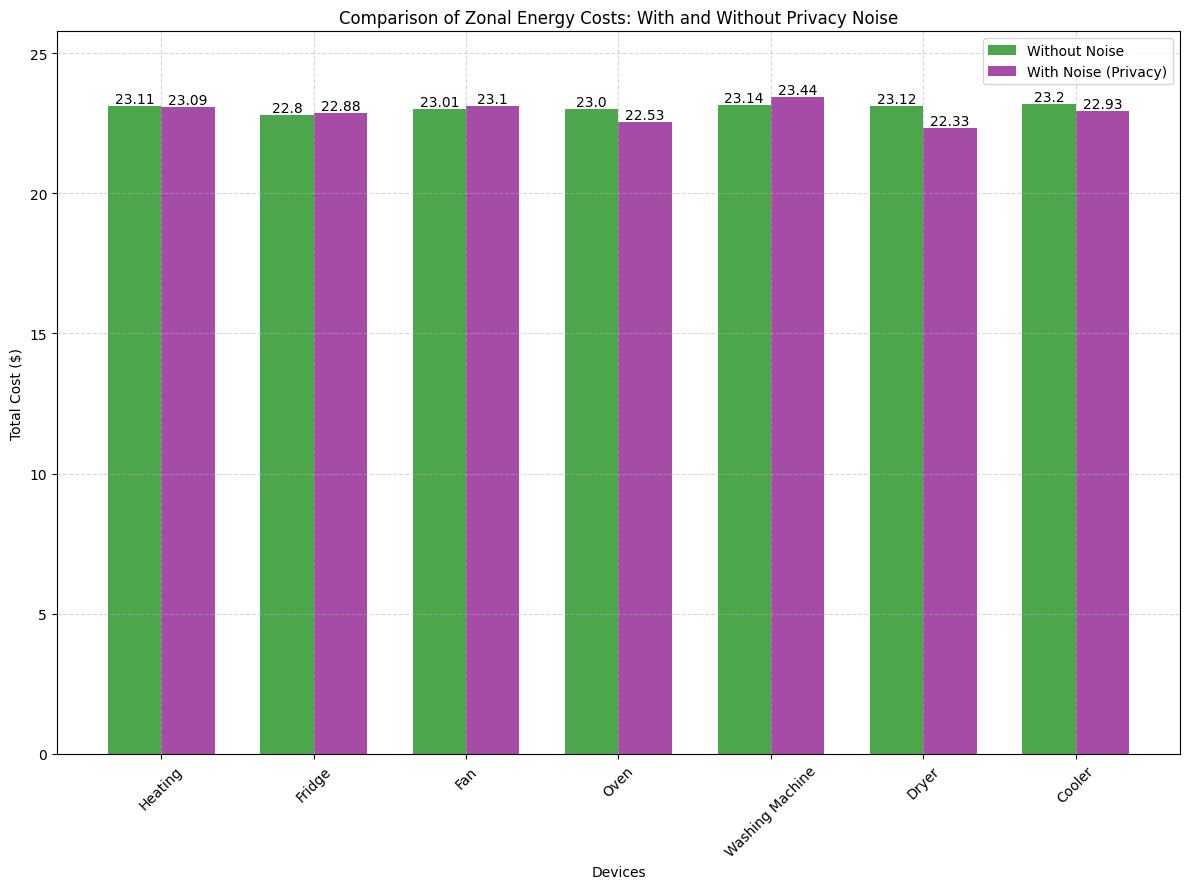}
    \caption{Zonal energy consumption report with and without privacy
    noise. Aggregation suppresses noise while preserving overall demand
    patterns relevant for grid operation.}
    \label{fig11}
\end{figure}
Figure~\ref{shaploc} presents a detailed local SHAP report for an individual
household, illustrating appliance-level explanatory contributions over time.
The washing machine exhibits the widest SHAP value distribution, indicating
that its influence on predicted demand varies strongly with usage timing,
cycle duration, and interaction with other appliances. Although heating may
consume more energy on average, its SHAP distribution is more balanced,
reflecting a comparatively stable and less peak-driving role in the
prediction. Appliances such as ovens and fans display smaller SHAP values,
indicating limited predictive relevance, even when their instantaneous energy
usage is non-negligible.
\begin{figure*}[htbp]
    \centering
    \includegraphics[width=\textwidth]{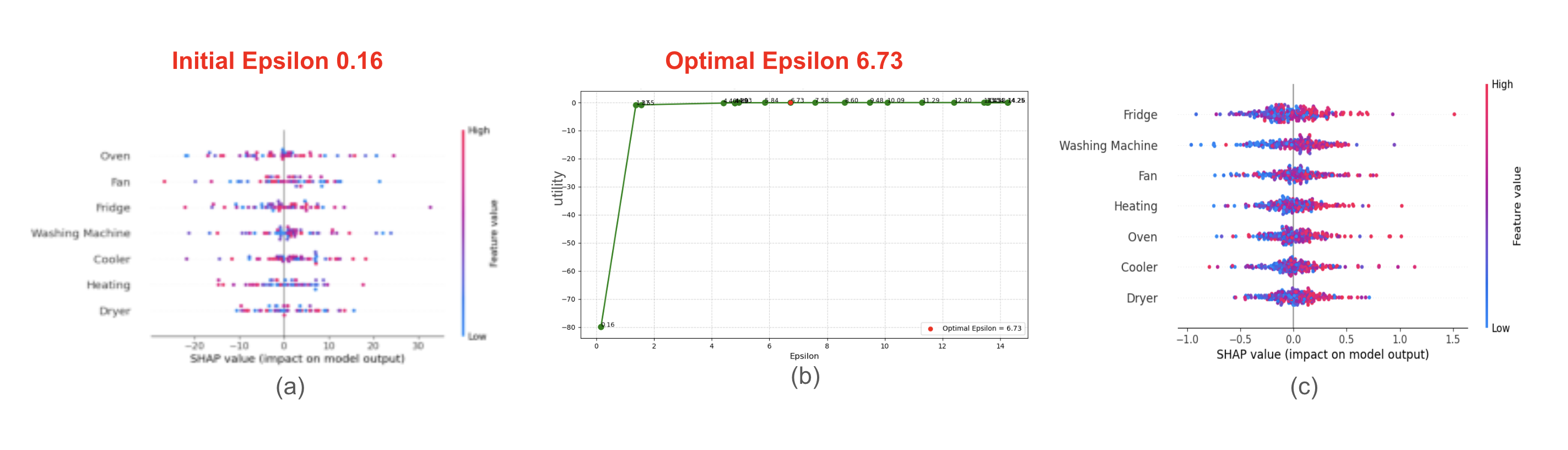}
    \caption{Privacy budget negotiation for an individual household:
    (a) initial strict privacy budget resulting in low utility,
    (b) negotiated privacy--utility trade-off with stabilized utility at
    $\epsilon=6.73$, and (c) corresponding local SHAP explanation at the
    negotiated budget, preserving semantic structure.}
    \label{fig4}
\end{figure*}
Crucially, this distinction cannot be inferred from raw consumption data alone. Raw measurements do not reveal whether an appliance causally drives predicted demand peaks, whether its effect is redundant with other devices, or how its influence varies with temporal context and feature interactions. Local SHAP explanations therefore move households from descriptive monitoring to actionable understanding. For instance, identifying the washing machine as a dominant explanatory driver enables off-peak rescheduling to reduce costs, even if it is not the largest energy consumer. More broadly, SHAP surfaces latent effects—such as efficiency, usage patterns, and program settings—that are invisible in raw data. Importantly, this local explainability provides the semantic foundation for zonal aggregation: only predictive influence, rather than raw usage, can be meaningfully aggregated under differential privacy, while detailed behavioral insights remain private.

\subsubsection{Zonal (Edge) Level Analysis}
HXAI aggregates local explanations at the zonal level and applies differential privacy to protect household data while enabling grid insights. Despite Laplace noise, overall consumption patterns and peak-demand signals remain consistent (Fig.~\ref{fig11}), and the relative importance of key appliances such as washing machines is preserved (Fig.~\ref{figzonal}). Although noise reduces precision, semantic stability is maintained, allowing energy providers to identify actionable patterns without compromising household confidentiality.
\begin{figure}[htbp]
    \centering
    \includegraphics[width=\linewidth ]{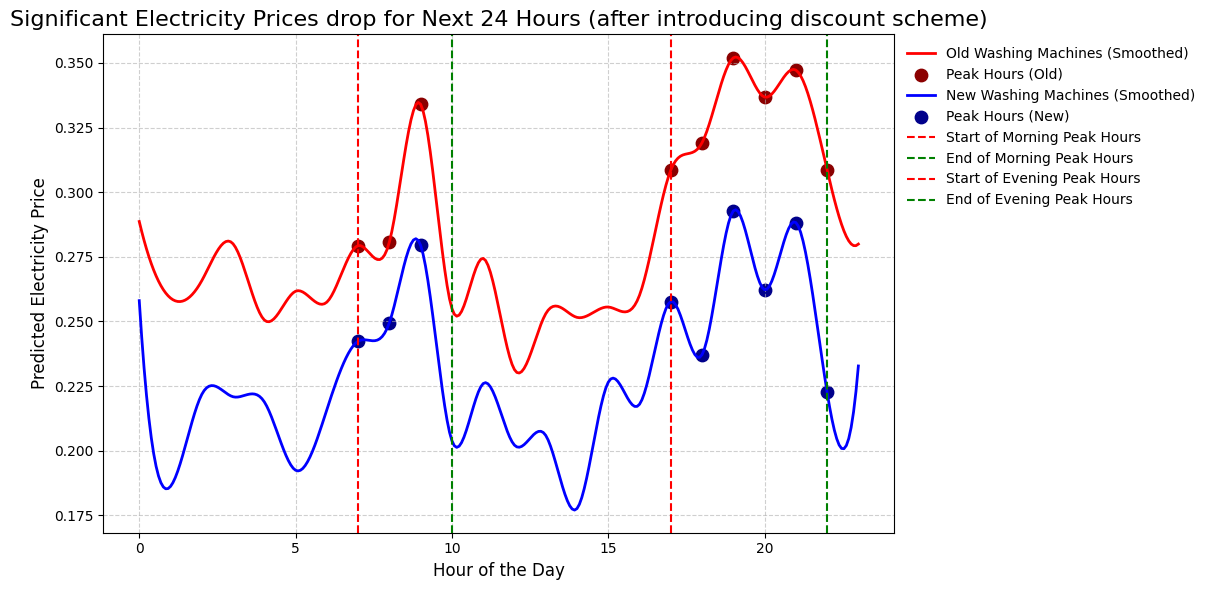}
    \caption{Predicted hourly electricity prices before and after an
    appliance upgrade intervention informed by zonal HXAI explanations.
    Peak prices are significantly reduced, resulting in a smoother and
    more stable price curve.}
    \label{fig8}
\end{figure}
\begin{figure*}[htbp]
    \centering
    \includegraphics[width=\linewidth]{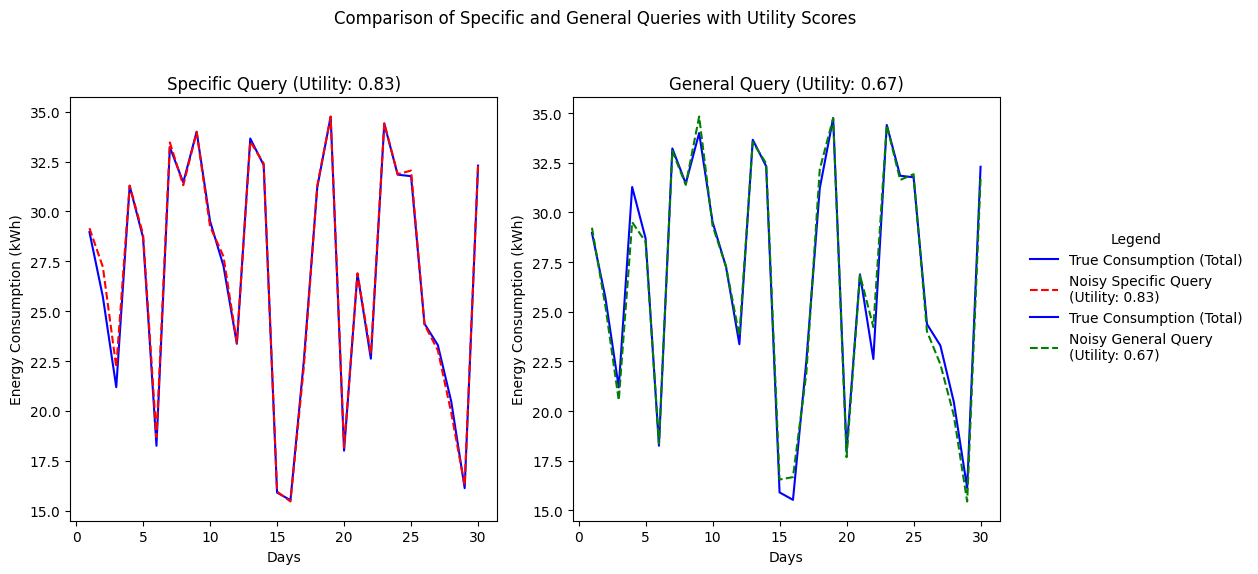}
    \caption{Utility comparison between specific (appliance-level) and
    general (aggregate) queries under differential privacy. Specific
    queries provide higher utility but consume more privacy budget.}
    \label{fig:specific_general_queries}
\end{figure*}

\subsection{Privacy-Utility Trade-off and Budget Negotiation}
\label{sec:negotiation}

A key aspect of privacy-preserving explainability is balancing household privacy with energy provider utility needs. Unlike static differential privacy mechanisms with fixed budgets, HXAI allows dynamic negotiation of privacy budgets at the explanation level.  Figure~\ref{fig4}(a) depicts the initial negotiation for a household, proposing a strict privacy budget ($\epsilon = 0.16$). While this maximizes privacy, it distorts explanations due to noise, diminishing their utility for tasks like demand forecasting and pricing analysis. This level of privacy may satisfy the household but provides limited value to the energy provider.

Figure~\ref{fig4}(b) shows the outcome of the negotiation process, revealing
$\epsilon = 6.73$ as an optimized point at which explanation utility stabilizes
while privacy remains acceptable to the household. This optimized point
represents a voluntary compromise: lower noise preserves the semantic structure
of explanations, while sufficient noise prevents leakage of sensitive
behavioral patterns. Importantly, acceptance of this privacy level is
household-driven and contingent on perceived benefit, such as improved
tariffs, discounts, or other economic incentives associated with higher-quality
explanations. Figure~\ref{fig4}(c) confirms that at this negotiated privacy budget, local SHAP
explanations retain their essential structure: dominant features remain
dominant, signs are consistent, and relative ordering is stable. Similar
negotiation dynamics and operating points are observed across households,
indicating that HXAI supports adaptive, incentive-aligned privacy management
without requiring manual tuning or dataset-specific heuristics. If a household
is unwilling to relax its privacy preference, the system defaults to stricter
privacy guarantees, resulting in noisier or more coarse-grained explanations
without penalizing the household.
\begin{figure}[htbp]
    \centering
    \includegraphics[width=\linewidth, height = 6 cm ]{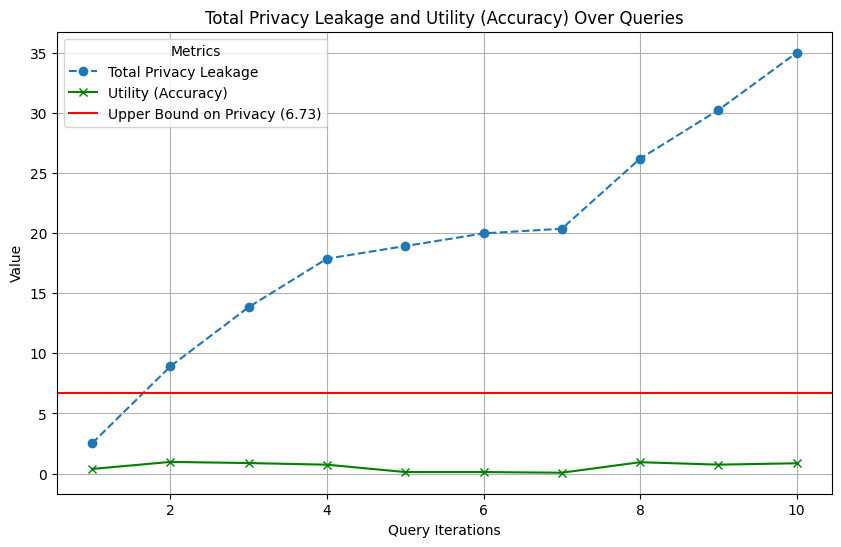}
    \caption{Accumulated privacy leakage and utility over repeated
    queries. Once the negotiated privacy budget is exhausted, households
    automatically deny further queries.}
    \label{fig:privacy_leakage}
\end{figure}
\subsection{Explainability-Driven Intervention and Grid Management}
\label{sec:intervention}
HXAI provides actionable grid insights while preserving household privacy. Zonal SHAP reports (Fig.~\ref{figzonal}) identify washing machines as major peak-demand contributors using privacy-preserving aggregation. We simulate a 20\% bill discount for households upgrading to energy-efficient models (20/50 participate). The 24-hour price forecast (Fig.~\ref{fig8}) shows lower peaks and a smoother curve compared to baseline, reflecting reduced aggregate demand. This demonstrates that HXAI enables explainability-driven interventions that stabilize load and reduce prices without exposing household-level data.
\begin{figure}[htbp]
\centering
\begin{adjustbox}{width=\linewidth, max height=8cm}
\begin{tikzpicture}[
    node distance = 8mm and 8mm,
    base/.style = {
        ellipse, draw,
        font=\LARGE\bfseries,
        text width=5.8cm,
        align=center,
        minimum height=1.6em,
        inner sep=4pt
    },
    query/.style   = {base, fill=blue!10},
    process/.style = {base, fill=green!10},
    output/.style  = {base, fill=red!15},
    arrow/.style   = {->, >=Stealth, thick, draw=gray!70}
]

\node[query] (specific) {Specific Query $q_{\mathrm{spec}}(h_i)$};
\node[query, right=of specific] (general) {General Query $q_{\mathrm{gen}}(h_i)$};

\node[process, below=of specific |- general.south] (compare)
{Compare Privacy Budgets};
\node[process, below=of compare] (total)
{Total Privacy Leakage};
\node[process, below=of total] (check)
{Check Privacy Threshold};
\node[output, below=of check] (deny)
{Further Queries Denied};

\draw[arrow] (specific) -- (compare);
\draw[arrow] (general) -- (compare);
\draw[arrow] (compare) -- (total);
\draw[arrow] (total) -- (check);
\draw[arrow] (check) -- (deny);

\node[
    fit=(specific)(general)(deny),
    inner sep=10pt,
    draw=gray!40,
    dashed,
    rounded corners
] {};

\end{tikzpicture}
\end{adjustbox}

\caption{Query handling and privacy accounting workflow in HXAI.}
\label{fig:privacy_query_workflow}
\end{figure}

\subsection{Query Mechanisms and Accumulated Privacy Leakage}
\label{sec:queries}

In operational settings, HXAI supports repeated explainability queries by differentiating query types and enforcing cumulative privacy-budget control through the Information Control Agreement (ICA).
\subsubsection{General and specific queries.}
HXAI supports two query types. \emph{General queries} provide aggregated, coarse-grained insights (e.g., zonal trends) with lower privacy cost. \emph{Specific queries} target fine-grained, appliance-level or temporal details and therefore consume a larger privacy budget. As shown in Fig.~\ref{fig:specific_general_queries}, specific queries yield higher explanation utility (83\%) than general queries (67\%), but at increased privacy cost. This trade-off follows the query-aware pricing mechanism in the ICA, where each query $q$ incurs a privacy cost $\varepsilon(q)$.

\subsubsection{Multiple queries and cumulative privacy leakage.}
To assess long-term privacy effects, we simulate repeated queries to a household’s Private Data Box under a fixed ICA. Each query consumes a negotiated privacy budget $\varepsilon_{i,j}^{\mathrm{adjust}}$, and cumulative leakage follows sequential composition. As shown in Fig.~\ref{fig:privacy_leakage}, leakage increases gradually until the household-specific threshold (e.g., $\varepsilon_i^{\max}=6.73$) is reached, after which further queries are denied. This prevents unbounded privacy erosion and enforces household preferences over time. By treating privacy as a consumable, trackable resource, HXAI enables sustainable long-term explainability under formal differential privacy guarantees.

\subsection{Discussion}
\label{sec:takeaway}

This work evaluates whether explanations remain meaningful under differential privacy. HXAI preserves feature identity, ranking, and directional influence across privacy regimes, whereas centralized DP-SHAP often causes rank instability and sign inversions. Unlike methods that depend on larger privacy budgets for numerical fidelity, HXAI maintains structural stability by design and supports local interpretability, privacy-preserving aggregation, and bounded long-term privacy leakage. Although evaluated on smart grids, HXAI generalizes to other privacy-sensitive domains where local interpretability and aggregate insights must coexist under formal privacy guarantees.

\textbf{Key takeaways:}
\begin{itemize}[leftmargin=*, itemsep=1pt, topsep=0pt]
    \item Explainability under DP requires semantic stability, not only numerical accuracy.
    \item Centralized DP explanations disrupt feature meaning and ordering.
    \item HXAI preserves stable, decision-relevant explanations via hierarchical aggregation.
    \item Adaptive privacy allocation and query-aware accounting enable practical, long-term deployment.
\end{itemize}

\subsection{Ethics and Reproducibility}
HXAI uses only simulated or publicly available datasets and introduces no new data collection or personally identifiable information. Household data are private by design, and differential privacy bounds leakage under repeated queries. The study poses no additional ethical risks beyond standard energy forecasting and XAI research. All datasets, code, and scripts are available in an anonymous repository: \url{https://anonymous.4open.science/r/HXAI-E3F2/README.md}.

\section{Conclusion}
In this paper, We introduce HXAI, a framework for explainable and privacy-preserving AI in distributed energy systems. By separating local explanation from data aggregation, HXAI protects household privacy while enabling intelligent grid management. Unlike traditional methods that degrade feature consistency under differential privacy, HXAI preserves feature identity and importance across datasets and privacy levels. The framework supports dynamic privacy budget negotiation and practical decision-making without exposing sensitive data. Overall, HXAI demonstrates that transparency and privacy can be achieved simultaneously in energy systems and other sensitive domains.

\bibliographystyle{IEEEtran}
\bibliography{main}

\end{document}